\newtheorem{theorem}{Theorem}
\newtheorem{definition}{Definition}
\DeclareMathOperator*{\argmax}{arg\,max}
\title{ MAVEN: \textbf{M}ulti-\textbf{A}gent \textbf{V}ariational \textbf{E}xploratio\textbf{n} }
\author{
Anuj Mahajan\thanks{Correspondence to:  \texttt{anuj.mahajan@cs.ox.ac.uk}}\hspace{0.15cm}\thanks{Dept. of Computer Science, University of Oxford}\\
\And
Tabish Rashid\footnotemark[2] \\
\And
Mikayel Samvelyan\thanks{Russian-Armenian University}\\
\And
Shimon Whiteson\footnotemark[2] \\
}
\begin{document}	
	
\maketitle
\begin{abstract}

Centralised training with decentralised execution is an important setting for cooperative deep multi-agent reinforcement learning due to communication constraints during execution and computational tractability in training. In this paper, we analyse value-based methods that are known to have superior performance in complex environments \citep{samvelyan2019starcraft}. 
We specifically focus on QMIX \citep{rashid2018qmix}, the current state-of-the-art in this domain. We show that the representational constraints on the joint action-values introduced by QMIX and similar methods lead to provably poor exploration and suboptimality.
Furthermore, we propose a novel approach called MAVEN that hybridises value and policy-based methods by introducing a latent space for hierarchical control. The value-based agents condition their behaviour on the shared latent variable controlled by a hierarchical policy. This allows MAVEN to achieve committed, temporally extended exploration, which is key to solving complex multi-agent tasks. Our experimental results show that MAVEN achieves significant performance improvements on the challenging SMAC domain \citep{samvelyan2019starcraft}.\end{abstract}	
\section{Introduction}
\label{sec:intro}
%
%

Cooperative \emph{multi-agent reinforcement learning} (MARL) is a key tool for addressing many real-world problems such as coordination of robot swarms \citep{huttenrauch_guided_2017} and autonomous cars \citep{cao_overview_2012}. However, two key challenges stand between cooperative MARL and such real-world applications.  First, scalability is limited by the fact that the size of the joint action space grows exponentially in the number of agents.  Second, while the training process can typically be centralised, partial observability and communication constraints often mean that execution must be decentralised, i.e., each agent can condition its actions only on its local action-observation history, a setting known as \emph{centralised training with decentralised execution} (CTDE).

While both policy-based \citep{foerster2018counterfactual} and value-based \citep{rashid2018qmix, tan1993multi, sunehag2017value} methods have been developed for CTDE, the current state of the art, as measured on SMAC, a suite of StarCraft II micromanagement benchmark tasks \citep{samvelyan2019starcraft}, is a value-based method called QMIX \citep{rashid2018qmix}.  QMIX tries to address the challenges mentioned above by learning \emph{factored} value functions.  By decomposing the joint value function into factors that depend only on individual agents, QMIX can cope with large joint action spaces.  Furthermore, because such factors are combined in a way that respects a monotonicity constraint, each agent can select its action based only on its own factor, enabling decentralised execution. However, this 
decentralisation comes with a price, as the monotonicity constraint restricts QMIX to suboptimal value approximations.

QTRAN\citep{son2019qtran}, another recent method, performs this trade-off differently by formulating multi-agent learning as an optimisation problem with linear constraints and relaxing it with $L2$ penalties for tractability. 

In this paper, we shed light on a problem unique to decentralised MARL that arises due to inefficient exploration. Inefficient exploration hurts decentralised MARL, not only in the way it hurts single agent RL\citep{mnih2013playing} (by increasing sample inefficiency\citep{mahajan2017asymmetry,mahajan2017symmetry}), but also by interacting with the representational constraints necessary for decentralisation to push the algorithm towards suboptimal policies.  Single agent RL can avoid convergence to suboptimal policies using various strategies like increasing the exploration rate ($\epsilon$) or policy variance, ensuring optimality in the limit. However, we show, both theoretically and empirically, that the same is not possible in decentralised MARL. 

Furthermore, we show that \emph{committed} exploration can be used to solve the above problem. In committed exploration \citep{osband2016deep}, exploratory actions are performed over extended time steps in a coordinated manner. Committed exploration is key even in single-agent exploration but is especially important in MARL, as many problems involve long-term coordination, requiring exploration to discover temporally extended joint strategies for maximising reward. Unfortunately, none of the existing methods for CTDE are equipped with \emph{committed} exploration.

To address these limitations, we propose a novel approach called \emph{multi-agent variational exploration} (MAVEN) that hybridises value and policy-based methods by introducing a latent space for hierarchical control. MAVEN's value-based agents condition their behaviour on the shared latent variable controlled by a hierarchical policy. Thus, fixing the latent variable, each joint action-value function can be thought of as a mode of joint exploratory behaviour that persists over an entire episode. 
Furthermore, MAVEN uses mutual information maximisation between the trajectories and latent variables to learn a diverse set of such behaviours. This allows MAVEN to achieve committed exploration while respecting the representational constraints. We demonstrate the efficacy of our approach by showing significant performance improvements on the challenging SMAC domain.

\section{Background}
\begin{wrapfigure}{r}{0.24\textwidth}
	\vspace{-0.4cm}
	\begin{center}
		\includegraphics[width=0.24\textwidth]{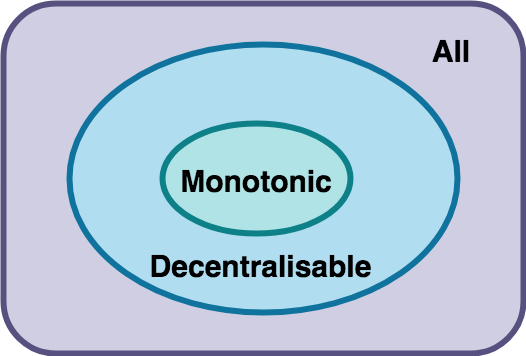}
	\end{center}
	\caption{Classification of MARL problems. \label{fig:mc}} 
	\vspace{-0.5cm}
\end{wrapfigure}
We model the fully cooperative multi-agent task as a Dec-POMDP \citep{oliehoek_concise_2016}, which is formally defined as a tuple $G=\left\langle S,U,P,r,Z,O,n,\gamma\right\rangle$. 
$S$ is the state space of the environment. At each time step $t$, every agent $ i \in \mathcal{A} \equiv \{1,...,n\}$ chooses an action $u^i \in U$ which forms the joint action $\mathbf{u}\in\mathbf{U}\equiv U^n$.
$P(s'|s,\mathbf{u}):S\times\mathbf{U}\times S\rightarrow [0,1]$ is the state transition function. $r(s,\mathbf{u}):S \times \mathbf{U} \rightarrow \mathbb{R}$ is the reward function shared by all agents and $\gamma \in [0,1)$ is the discount factor. 
We consider \textit{partially observable} settings, where each agent does not have access to the full state and instead samples observations $z\in Z$ according to observation function $O(s,i):S\times \mathcal{A}\rightarrow Z$. 
The action-observation history for an agent $i$ is $\tau^i\in T\equiv(Z\times U)^*$ on which it can condition its policy $\pi^i(u^i|\tau^i):T\times U\rightarrow [0,1]$.
We use $u^{-i}$  to denote the action of all the agents other than $i$ and follow a similar convention for the policies $\pi^{-i}$. 
The joint policy $\pi$ is based on \textit{action-value function}: $Q^\pi(s_t, \mathbf{u}_t)=\mathbb{E}_{s_{t+1:\infty},\mathbf{u}_{t+1:\infty}} \left[\sum^{\infty}_{k=0}\gamma^kr_{t+k}|s_t,\mathbf{u}_t\right]$.
The goal of the problem is to find the optimal action value function $Q^*$. During centralised training, the learning algorithm has access to the action-observation histories of all agents and the full state. However, each agent can only condition on its own local action-observation history $\tau^i$  during decentralised execution (hence the name CTDE). For CTDE methods factoring action values over agents, we represent the individual agent utilities by $q_{i}, i \in \mathcal{A}$. An important concept for such methods is \textit{decentralisability} (see IGM in \citep{son2019qtran}) which asserts that $\exists q_i$, such that $\forall s,\mathbf{u}$:  
\begin{equation}
\argmax_{\mathbf{u}}Q^*(s, \mathbf{u}) = 
\begin{pmatrix}
\argmax_{u^1}q_1(\tau^1, u^1)
\hdots
\argmax_{u^n}q_n(\tau^n, u^n)
\end{pmatrix}^{'},
\label{igm}
\end{equation}
\cref{fig:mc} gives the classification for MARL problems. While the containment is strict for partially observable setting, it can be shown that all tasks are decentralisable given full observability and sufficient representational capacity.
\paragraph{QMIX \citep{rashid2018qmix}} is a value-based method that learns a monotonic approximation $Q_{qmix}$ for the joint action-value function. Figure \ref{fig:QMIX} in \cref{app:qmix_arch} illustrates its overall setup, reproduced for convenience. 
QMIX factors the joint-action $Q_{qmix}$ into a monotonic nonlinear combination of individual utilities $q_i$ of each agent which are learnt via a \textit{utility network}. 
A \textit{mixer network} with nonnegative weights is responsible for combining the agent's utilities for their chosen actions $u^i$ into $Q_{qmix}(s,\mathbf{u})$.
This nonnegativity ensures that $\frac{\partial Q_{qmix}(s,\mathbf{u})}{\partial q_i(s,u^i)}\geq 0$, which in turn guarantees \cref{igm}.
This decomposition allows for an efficient, tractable maximisation as it can be performed in $\mathcal{O}(n|U|)$ time as opposed to $\mathcal{O}(|U|^n)$. Additionally, it allows for easy decentralisation as each agent can independently perform an argmax. 
During learning, the QMIX agents use $\epsilon$-greedy exploration over their individual utilities to ensure sufficient exploration. For VDN \citep{sunehag2017value} the factorization is further restrained to be just the sum of utilities: $Q_{vdn}(s,\mathbf{u}) = \sum_i q_i(s, u^i)$.

\paragraph{QTRAN \citep{son2019qtran}} is another value-based method. Theorem 1 in the QTRAN paper guarantees optimal decentralisation by using linear constraints between agent utilities and joint action values, but it imposes ${{\mathcal{O}}(|S||U|^n)}$ constraints on the optimisation problem involved, where $|\cdot|$ gives set size. This is computationally intractable to solve in discrete state-action spaces and is impossible given continuous state-action spaces. The authors propose two algorithms (QTRAN-base and QTRAN-alt) which relax these constraints using two L2 penalties. While QTRAN tries avoid QMIX's limitations, we found that it performs poorly in practice on complex MARL domains (see \cref{exps}) as it deviates from the exact solution due to these relaxations.

\section{Analysis}
\label{sec:aqmix}
In this section, we analyse the policy learnt by QMIX in the case where it cannot represent the true optimal action-value function. Our analysis is not restricted to QMIX and can easily be extended to similar algorithms like VDN \citep{sunehag2017value} whose representation class is a subset of QMIX. 
Intuitively, monotonicity implies that the optimal action of agent $i$ does not depend on the actions of the other agents.
This motivates us to characterise the class of $Q$-functions that cannot be represented by QMIX, which we call \emph{nonmonotonic} $Q$ functions. 
\vspace{2mm}

\begin{definition}[Nonmonotonicity]
\label{def:NM}
For any state $s \in S$ and agent $i \in \mathcal{A}$ given the actions of the other agents $u^{-i} \in U^{n-1}$, the $Q$-values $Q(s, (u^i, u^{-i}))$ form an ordering over the action space of agent $i$. Define $C(i, u^{-i}) := \{ (u^i_1,...,u^i_{|U|}) \vert Q(s, (u^i_j, u^{-i})) \geq Q(s, (u^i_{j+1}, u^{-i})),j \in \{1,\dots,|U|\}, u^i_j \in U, j\neq j'\implies u^i_j \neq u^i_{j^{'}}\}$, as the set of all possible such orderings over the action-values. The joint-action value function is \textbf{nonmonotonic} if $\exists i \in \mathcal{A}, u_1^{-i} \neq u_2^{-i}$ s.t.\ $C(i, u_1^{-i}) \cap C(i, u_2^{-i}) = \varnothing$.
\end{definition}
A simple example of a nonmonotonic $Q$-function is given by the payoff matrix of the two-player three-action matrix game shown on \cref{fig:matrix}(a). \cref{fig:matrix}(b) shows the values learned by QMIX under \textit{uniform visitation}, i.e., when all state-action pairs are explored equally.

\definecolor{myblue}{HTML}{0000B5}
\definecolor{crimson}{HTML}{B30000}
\newcommand{\bb}[1]{\textcolor{myblue}{#1}}
\newcommand{\cc}[1]{\textcolor{crimson}{#1}}
\begin{table}[h]
	\setlength{\extrarowheight}{3pt}{}
	\centering
	\begin{tabular}{c|*{3}{>{\centering\arraybackslash}p{.05\linewidth}|}}
		\multicolumn{1}{c}{} & \multicolumn{1}{c}{\bb{$A$}}  & \multicolumn{1}{c}{\bb{$B$}} & \multicolumn{1}{c}{\bb{$C$}} \\ \cline{2-4}
		\cc{$A$} & 10.4 & 0 & 10 \\\cline{2-4}
		\cc{$B$} & 0 & 10 & 10 \\\cline{2-4}
		\cc{$C$} & 10 & 10 & 10 \\\cline{2-4}
		\multicolumn{1}{c}{} & \multicolumn{3}{c}{(a)}
	\end{tabular}~~~~
	\begin{tabular}{c|*{3}{>{\centering\arraybackslash}p{.05\linewidth}|}}
		\multicolumn{1}{c}{} & \multicolumn{1}{c}{\bb{$A$}}  & \multicolumn{1}{c}{\bb{$B$}} & \multicolumn{1}{c}{\bb{$C$}} \\ \cline{2-4}
		\cc{$A$} & 6.08 & 6.08 & 8.95 \\\cline{2-4}
		\cc{$B$} & 6.00 & 5.99 & 8.87 \\\cline{2-4}
		\cc{$C$} & 8.99 & 8.99 & 11.87 \\\cline{2-4}
		\multicolumn{1}{c}{} & \multicolumn{3}{c}{(b)}
	\end{tabular}
	\caption{(a) An example of a nonmonotonic payoff matrix, (b) QMIX values under uniform visitation.}	
	\label{fig:matrix}
\end{table}




Of course, the fact that QMIX cannot represent the optimal value function does not imply that the policy it learns must be suboptimal.  However, the following analysis establishes the suboptimality of such policies.
\begin{theorem}[Uniform visitation] 
\label{uv}
    For $n$-player, $k\geq 3$-action matrix games ($|\mathcal{A}|=n, |U|=k$), under uniform visitation, $Q_{qmix}$ learns a $\delta$-suboptimal policy for any time horizon $T$, for any $0<\delta\leq R\Big[\sqrt{\frac{a(b+1)}{a+b}}-1\Big]$ for the payoff matrix ($n$-dimensional) given by the template below, where $b = \sum_{s=1}^{k-2} {{n+s-1}\choose s}$, $a = k^n - (b+1)$, $R>0$:
    \begin{align}
    \begin{bmatrix} 
    R+\delta & 0 & \dotsc & R \\ 
    0 & &\iddots & \\
    \vdots &\iddots  & & \vdots \\
    R &   \dotsc && R
    \end{bmatrix}
    \end{align}
\end{theorem}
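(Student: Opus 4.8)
The plan is to determine exactly which solution ``$Q_{qmix}$ under uniform visitation'' converges to, read off its greedy joint action, and compare the resulting return with that of the optimum.

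In a one-state (repeated) matrix game the mixing network's state-dependence is vacuous and the bootstrapped target equals $r(\mathbf{u})$ up to a state-independent constant, so QMIX is just performing a least-squares fit of $r$ over its representable class with uniform weights over the $k^n$ joint actions. The first — and most delicate — step is to show that the solution QMIX converges to is the orthogonal projection of $r$ onto additively separable functions (this is what the entries in \cref{fig:matrix}(b) reflect): $Q_{qmix}(\mathbf{u}) = \bar{r} + \sum_i \alpha_i(u^i)$, where $\bar{r}$ is the mean payoff and $\alpha_i(c) = \mathbb{E}_{\mathbf{u}\sim\mathrm{Unif}}[\, r(\mathbf{u}) \mid u^i = c\,] - \bar{r}$. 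The template payoff depends on $\mathbf{u}$ only through $\sigma(\mathbf{u}) := \sum_i (u^i - 1)$: it equals $R+\delta$ when $\sigma = 0$, equals $0$ when $1 \le \sigma \le k-2$, and equals $R$ otherwise; since exactly $\binom{n+s-1}{s}$ joint actions have $\sigma = s$ for $s \le k-1$, there are $b = \sum_{s=1}^{k-2}\binom{n+s-1}{s}$ zero entries, one $(R+\delta)$ entry, and $a = k^n - b - 1$ entries equal to $R$. By this permutation symmetry all the $\alpha_i$ equal a common $\alpha$, so the learnt greedy action is $(\argmax_c \alpha(c), \dots, \argmax_c \alpha(c))$.

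The heart of the argument is then the combinatorial comparison of $\alpha(1)$ with $\alpha(c)$ for $c \neq 1$. Conditioning on $u^i = 1$ keeps the large payoff $R+\delta$ reachable but also puts the agent in contact with every zero entry, whereas conditioning on the top action $u^i = k$ eliminates all zero entries and stays entirely in the $R$-region, so the peak at $\mathbf{u}=(1,\dots,1)$ gets ``smeared'' by the surrounding zeros in the additive fit. Carrying out the conditional means over the $k^n$ cells, I expect $\alpha(1) < \alpha(k)$ — hence action $1$ is not each agent's greedy action, the learnt greedy joint action has payoff $\le R$, and the learnt policy is $\delta$-suboptimal — precisely when $\delta$ is below a threshold of the form $R(\theta-1)$ determined by the counts $a$ and $b$; matching $\theta$ to $\sqrt{a(b+1)/(a+b)}$ is the quantitative payoff of this computation (and can alternatively be phrased as comparing the least-squares errors of corner-greedy versus suboptimal fits, with the crossover at $(R+\delta)^2 = R^2\,\tfrac{a(b+1)}{a+b}$). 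Finally, since this is the fixed point reached for any exploration schedule and since over a horizon $T$ the optimal policy collects $T(R+\delta)$ while the learnt greedy policy collects at most $TR$, the learnt policy is $\delta$-suboptimal for every $T$, and — unlike in single-agent RL — no increase in the exploration rate closes the gap.

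The main obstacle is the first step: rigorously establishing that the uniform-visitation fixed point of QMIX is the additive projection rather than some richer monotone fit, and then pushing the combinatorics through for general $n$ and $k$ so that the counts $b = \sum_{s=1}^{k-2}\binom{n+s-1}{s}$ and $a = k^n - b - 1$ enter the conditional means (or the error comparison) cleanly enough to land on the precise constant $\sqrt{a(b+1)/(a+b)}$.
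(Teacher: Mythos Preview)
Your central conjecture --- that under uniform visitation QMIX converges to the orthogonal projection of $r$ onto \emph{additively separable} functions --- is false, and the argument built on it does not recover the stated threshold. QMIX's mixing network represents arbitrary monotone combinations of the $q_i$, strictly more than VDN's additive class; the best monotone least-squares fit is therefore \emph{not} the additive projection. You can see this already in the $n=2$, $k=3$ instance: the conditional-mean comparison you propose, $\alpha(1)$ versus $\alpha(k)$, gives $E[r\mid u^i{=}1]=(2R+\delta)/3$ and $E[r\mid u^i{=}3]=R$, hence a crossover at $\delta=R$; but with $a=6$, $b=2$ the theorem's threshold is $R[\sqrt{18/8}-1]=R/2$. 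So the additive route lands on the wrong constant, and the parenthetical ``alternative phrasing'' you mention is not an alternative phrasing of the same computation --- it is a genuinely different computation that happens to be the right one.

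The paper's proof is exactly that alternative. It first uses symmetry (the payoff is invariant under agent permutation and under permutations of actions $2,\dots,k-1$) together with an exchange argument to reduce the search over monotone fits to the three-parameter family with value $x_3$ on the corner cell, $x_2$ on the $b$ zero cells, and $x_1$ on the remaining $a$ cells. It then sets up two convex quadratic programs: $M1$ with the ordering $x_1\ge x_2\ge x_3$ (greedy action at the all-$k$ cell, payoff $R$) and $M2$ with $x_3\ge x_2\ge x_1$ (greedy action at the corner, payoff $R+\delta$). Solving both via KKT yields $\mathrm{OPT}(M1)=b(R+\delta)^2/(b+1)$ and $\mathrm{OPT}(M2)=abR^2/(a+b)$, and QMIX selects the $M1$ (suboptimal) branch exactly when $\mathrm{OPT}(M1)\le \mathrm{OPT}(M2)$, i.e.\ $(R+\delta)^2\le R^2\,a(b+1)/(a+b)$, which is the bound in the statement. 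Your instinct that the answer comes from ``comparing the least-squares errors of corner-greedy versus suboptimal fits'' is correct; what is missing is abandoning the additive ansatz and instead carrying out the constrained minimisation over the monotone class in each ordering.
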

\begin{proof}
see \cref{proof:uv}
\end{proof}

We next consider $\epsilon$-greedy visitation, in which each agent uses an $\epsilon$-greedy policy and $\epsilon$ decreases over time. Below we provide a probabilistic bound on the maximum possible value of $\delta$ for QMIX to learn a suboptimal policy for any time horizon $T$.
\begin{theorem}[$\epsilon$-greedy visitation]
\label{gv}  
    For $n$-player, $k\geq 3$-action matrix games, under $\epsilon$-greedy visitation $\epsilon(t)$, $Q_{qmix}$ learns a $\delta$-suboptimal policy for any time horizon $T$ with probability $\geq 1-\Big(\exp(-\frac{T\upsilon^2}{2}) + (k^n -1)\exp(-\frac{T\upsilon^2}{2(k^n-1)^2})\Big)$ , for any $0<\delta\leq R\Bigg[\sqrt{a\Big(\frac{\upsilon b}{2(1-\upsilon/2)(a+b)}+1\Big)}-1\Bigg]$ for the payoff matrix given by the template above, where $b = \sum_{s=1}^{k-2} {{n+s-1}\choose s}$, $a = k^n - (b+1)$, $R>0$ and $\upsilon = \epsilon(T)$.
\end{theorem}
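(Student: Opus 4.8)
The plan is to reduce \cref{gv} to the analysis behind \cref{uv}. At its learning fixed point, $Q_{qmix}$ is the greedy policy of a minimiser, over QMIX's monotone / IGM--factorisable class, of the \emph{visitation--weighted} regression error $\sum_{\mathbf u}\mu(\mathbf u)\big(Q_{qmix}(\mathbf u)-r(\mathbf u)\big)^2$; \cref{uv} is the special case $\mu\equiv$ uniform, and \cref{gv} is the case $\mu=\hat\mu_T$, the random empirical frequencies generated by $T$ steps of $\epsilon$--greedy exploration. I would proceed in three steps: (i) generalise the key estimate of \cref{proof:uv} from uniform to arbitrary weights; (ii) bound the relevant masses of $\hat\mu_T$ by concentration, which also yields the stated failure probability; (iii) substitute and check the $\delta$--range.

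For (i), use the agent--permutation symmetry of the template to take all utilities equal, $q_i\equiv q$, so representable functions are $g\big(q(u^1),\dots,q(u^n)\big)$ with $g$ monotone, whose argmax is $\mathbf 0$ iff $q(0)>q(j)$ for all $j\ge 1$. Comparing the least achievable weighted error subject to ``argmax $=\mathbf 0$'' with that subject to ``argmax $=$ some $R$--entry'' — as in \cref{proof:uv}, but keeping the aggregate weights $w_0:=\mu(\mathbf 0)$, $w_b:=\sum_{\text{zero entries}}\mu(\mathbf u)$ and $w_g:=\mu(\text{greedy }R\text{--entry})$ symbolic (the count $a$ enters because the $R$--entries are fitted jointly) — yields a threshold $\delta^\star$ of the shape $R\big[\sqrt{a(\rho+1)}-1\big]$ with $\rho$ increasing in $w_b/w_g$; for uniform weights it collapses to $R[\sqrt{a(b+1)/(a+b)}-1]$, recovering \cref{uv}. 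The qualitative fact to extract is that the best monotone fit \emph{never} has $\mathbf 0$ as its argmax whenever $\delta\le\delta^\star$, for \emph{every} weight vector carrying at least the stated mass on the zero entries.

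For (ii), since $\epsilon(\cdot)$ is non--increasing, at each step $t\le T$ the current greedy joint action is played with probability $\le 1-\upsilon$ and every other joint action with probability $\ge \upsilon/(k^n-1)$, uniformly in the identity of the greedy action. Writing each empirical frequency as an average of $T$ bounded martingale differences and applying Hoeffding/Azuma gives $\Pr[\hat\mu_T(\text{greedy})\ge 1-\upsilon/2]\le\exp(-T\upsilon^2/2)$ and, for each fixed non--greedy $\mathbf u$, $\Pr[\hat\mu_T(\mathbf u)\le \upsilon/(2(k^n-1))]\le\exp\!\big(-T\upsilon^2/(2(k^n-1)^2)\big)$; a union bound over the greedy action and the $k^n-1$ non--greedy actions produces exactly $\exp(-T\upsilon^2/2)+(k^n-1)\exp(-T\upsilon^2/(2(k^n-1)^2))$. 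On the complementary event $w_g\le 1-\upsilon/2$ and, since $k^n-1=a+b$, $w_b\ge b\upsilon/(2(a+b))$, hence $w_b/w_g\ge \upsilon b/\big(2(1-\upsilon/2)(a+b)\big)$; feeding this into $\delta^\star$ gives the stated $0<\delta\le R\big[\sqrt{a\big(\tfrac{\upsilon b}{2(1-\upsilon/2)(a+b)}+1\big)}-1\big]$. Finally, step (i) guarantees $\mathbf 0$ is never the running greedy action under any visitation carrying this much mass on the zero entries, so the $\epsilon$--greedy frequencies do satisfy the required bounds throughout training and the learned policy, whose value is at most $R=V^\star-\delta$, is $\delta$--suboptimal for every horizon $T$.

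\textbf{Main obstacle.} The crux is step (i): tightly solving the constrained least--squares problem for the monotone class with non--uniform weights and extracting the exact dependence of $\delta^\star$ on $w_0,w_b,w_g,a$, since only a tight bound reproduces the closed form in the statement. A secondary issue is that the per--step exploration law depends on the evolving greedy action, so the frequency counts are not i.i.d.\ sums; this is handled by the uniform--in--greedy--identity per--step bounds plus a martingale inequality, together with the self--consistency argument that rules out $\mathbf 0$ ever becoming greedy.
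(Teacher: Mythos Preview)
Your three--step plan --- extend the \cref{uv} regression to arbitrary visitation weights, use Hoeffding plus a union bound for the concentration, then substitute the $\upsilon\mapsto\upsilon/2$ margin --- is exactly the skeleton of the paper's proof. The one substantive difference is in how the running greedy action is handled. The paper short--circuits your self--consistency argument by reasoning \emph{adversarially}: it simply posits the visitation pattern most favourable to QMIX, namely that the \emph{optimal} cell $\mathbf 0$ is greedy throughout (so it receives mass $\approx 1-\upsilon$ and every other cell $\approx \upsilon/(k^n-1)$), and shows the $M1$ projection is still preferred for the stated $\delta$--range; any other visitation only makes QMIX's job harder. This avoids tracking which action is greedy at each step, and it means the relevant ratio is $w_b/w_0$ (zeros over the \emph{optimal} cell) rather than your $w_b/w_g$ (zeros over a greedy $R$--entry). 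Concretely, the paper just plugs $a'\gets \tfrac{a\upsilon}{(1-\upsilon)(a+b)}$, $b'\gets \tfrac{b\upsilon}{(1-\upsilon)(a+b)}$ into the uniform--case condition $OPT(M1)\le OPT(M2)$ and applies ordinary Hoeffding to the counts. Your proposal is more scrupulous about the non--i.i.d.\ structure of the $\epsilon$--greedy counts (Azuma rather than Hoeffding) and about closing the loop that $\mathbf 0$ never becomes greedy; the paper's route is shorter but glosses over both issues.

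One caveat on step (i): your claimed shape $\delta^\star=R\big[\sqrt{a(\rho+1)}-1\big]$ with $\rho=w_b/w_g$ does \emph{not} collapse to the uniform bound $R\big[\sqrt{a(b+1)/(a+b)}-1\big]$ (take $w_b=b$, $w_g=1$: you get $R[\sqrt{a(b+1)}-1]$, off by $\sqrt{a+b}$). Before filling in the constants you should redo the weighted $M1$/$M2$ comparison explicitly rather than pattern--matching the target formula; the exact dependence on $w_0,w_b$ and on the count $a$ of $R$--entries comes out of that quadratic program, not from the ratio $w_b/w_g$ alone.
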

\begin{proof}
    see \cref{proof:ev}
\end{proof}


The reliance of QMIX on $\epsilon$-greedy action selection prevents it from engaging in committed exploration \citep{osband2016deep}, in which a precise sequence of actions must be chosen in order to reach novel, interesting parts of the state space. 
Moreover, Theorems \ref{uv} and \ref{gv} imply that the agents can latch onto suboptimal behaviour early on, due to the monotonicity constraint. \cref{gv} in particular provides a surprising result: For a fixed time budget $T$, increasing QMIX's exploration rate lowers its probability of learning the optimal action due to its representational limitations. Intuitively this is because the monotonicity constraint can prevent the $Q$-network from correctly remembering the true value of the optimal action (currently perceived as suboptimal).
We hypothesise that the lack of a principled exploration strategy coupled with these representational limitations can often lead to catastrophically poor exploration, which we confirm empirically. 

\section{Methodology}
\label{sec:method}
\begin{wrapfigure}{r}{0.64\textwidth}
	\vspace{-0.4cm}
	\begin{center}
		\includegraphics[width=\linewidth]{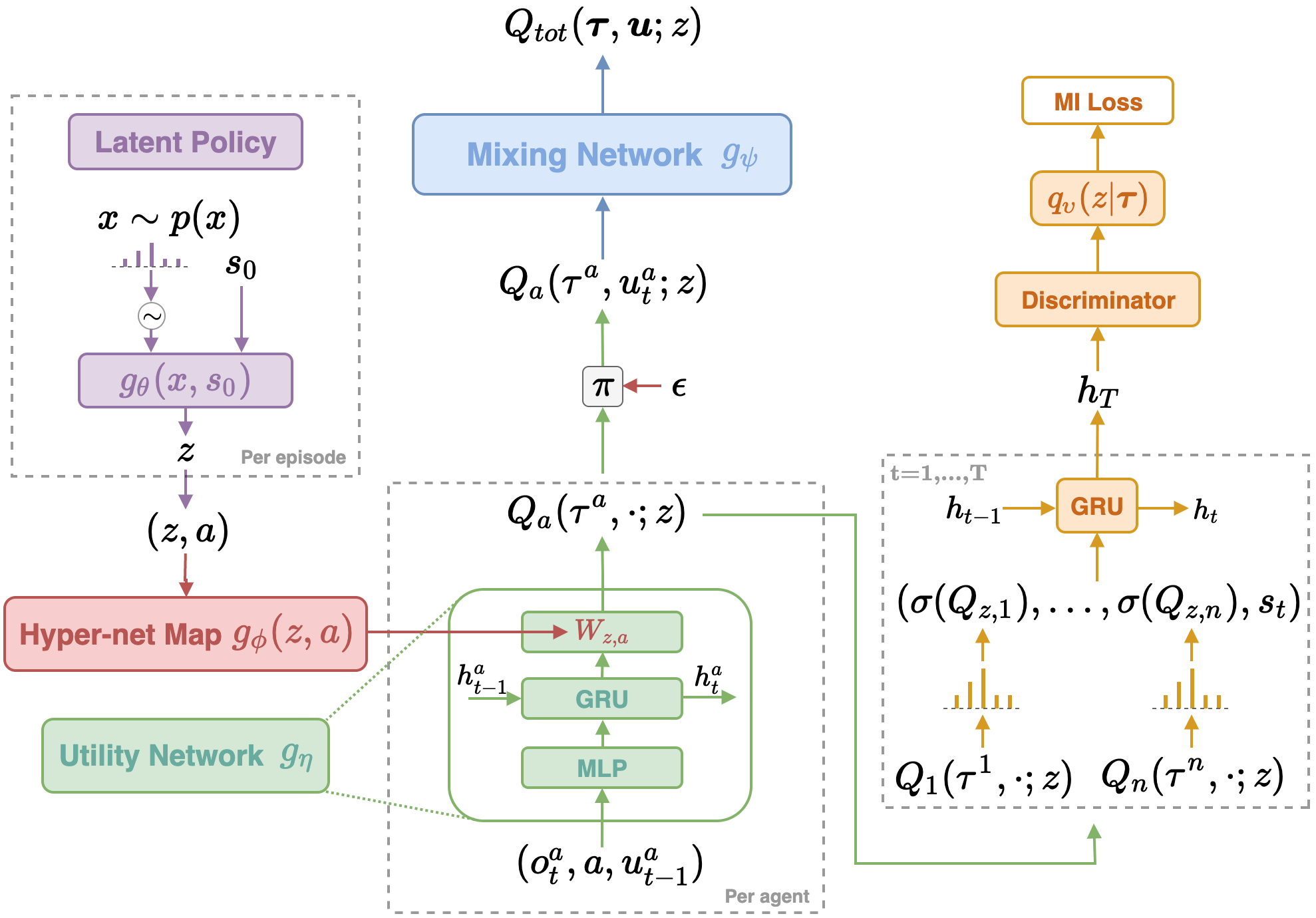}
	\end{center}
	\caption{Architecture for MAVEN.} 
	\label{fig:net}
	\vspace{-0.35cm}
\end{wrapfigure}
In this section, we propose \emph{multi-agent variational exploration} (MAVEN), a new method that overcomes the detrimental effects of QMIX's monotonicity constraint on exploration.  MAVEN does so by learning a diverse ensemble of monotonic approximations with the help of a latent space. Its architecture consists of value-based agents that condition their behaviour on the shared latent variable $z$ controlled by a hierarchical policy that off-loads $\epsilon$-greedy with committed exploration. Thus, fixing $z$, each joint action-value function is a monotonic approximation to the optimal action-value function that is learnt with $Q$-learning. Furthermore, each such approximation can be seen as a mode of committed joint exploratory behaviour. The latent  policy over $z$ can then be seen as exploring the space of \textit{joint behaviours} and can be trained using any policy learning method. Intuitively, the $z$ space should map to diverse modes of behaviour. 
 \cref{fig:net} illustrates the complete setup for MAVEN. We first focus on the lefthand side of the diagram, which describes the learning framework for the latent space policy and the joint action values. We parametrise the hierarchical policy by $\theta$, the agent utility network with $\eta$, the hypernet map from latent variable $z$ used to condition utilities by $\phi$, and the mixer net with $\psi$. $\eta$ can be associated with a feature extraction module per agent and $\phi$ can be associated with the task of modifying the utilities for a particular mode of exploration. We model the hierarchical policy $\pi_z(\cdot\vert s_0;\theta)$ as a transformation of a simple random variable  $x \sim p(x)$ through a neural network parameterised by $\theta$; thus $z \sim g_\theta(x, s_0)$, where  $s_0$ is initial state. Natural choices for $p(x)$ are uniform for discrete $z$ and uniform or normal for continuous $z$.

We next provide a coordinate ascent scheme for optimising the parameters. Fixing $z$ gives a joint action-value function $Q(\mathbf{u},s; z, \phi,\eta,\psi)$ which implicitly defines a greedy deterministic policy $\pi_\mathcal{A}(\mathbf{u}\vert s; z, \phi,\eta,\psi)$ (we drop the parameter dependence wherever its inferable for clarity of presentation). This gives the corresponding $Q$-learning loss:
\begin{align}
\mathcal{L}_{QL}(\phi,\eta,\psi) = \mathbb{E}_{\pi_\mathcal{A}}[(Q(\mathbf{u}_t,s_t; z)- [r(\mathbf{u}_t,s_t)+\gamma\max_{\mathbf{u}_{t+1}} Q(\mathbf{u}_{t+1},s_{t+1}; z)])^2],
\end{align}
where $t$ is the time step. Next, fixing $\phi,\eta,\psi$, the hierarchical policy over $\pi_z(\cdot\vert s_0;\theta)$ is trained on the cumulative trajectory reward $\mathcal{R}(\boldsymbol{\tau},z\vert \phi,\eta,\psi) = \sum_t r_t$ where $\boldsymbol{\tau}$ is the joint trajectory.

\begin{algorithm}[h!]
	\caption{MAVEN}
	\label{alg:maven}
	\begin{algorithmic}
		\STATE \mbox{Initialize parameter vectors  $\upsilon,\phi,\eta,\psi,\theta$}
		\STATE Learning rate $\leftarrow \alpha$, $\mathcal{D} \leftarrow \left\{ \right\}$ 
		\FOR{each episodic iteration}
		\STATE $s_0 \sim \rho(s_0)$, $x \sim p(x), z\sim g_\theta(x; s_0)$
		\FOR{each environment step t}
		\STATE $\mathbf{u}_t \sim {\pi_\mathcal{A}}(u \vert s_t; ; z, \phi,\eta,\psi)$
		\STATE $s_{t+1} \sim p(s_{t+1} \vert s_t, \mathbf{u}_t)$
		\STATE $\mathcal{D} \leftarrow \mathcal{D} \cup \left\{(s_t, \mathbf{u}_t, r(s_t, \mathbf{u}_t), r_{aux}^z(\mathbf{u}_t,s_t) , s_{t+1} )\right\}$
		\ENDFOR
		\FOR{each gradient step}
		\item $\phi \leftarrow \phi +  \alpha\hat \nabla_\phi (\lambda_{MI} \mathcal{J}_V - \lambda_{QL}\mathcal{L}_{QL})$ (Hypernet update)
		\STATE $\eta \leftarrow \eta +  \alpha\hat \nabla_\eta (\lambda_{MI} \mathcal{J}_V - \lambda_{QL}\mathcal{L}_{QL})$ (Feature update)
		\STATE $\psi \leftarrow \psi +  \alpha\hat \nabla_\psi (\lambda_{MI} \mathcal{J}_V - \lambda_{QL}\mathcal{L}_{QL})$ (Mixer update)
		\STATE $\upsilon \leftarrow \upsilon +  \alpha\hat \nabla_\upsilon \lambda_{MI} \mathcal{J}_V$ (Variational update)
		\STATE $\theta \leftarrow \theta +  \alpha\hat \nabla_\theta \mathcal{J}_{RL}$ (Latent space update)
		\ENDFOR
		\ENDFOR
	\end{algorithmic}
\end{algorithm}

Thus, the hierarchical policy objective for $z$, freezing the parameters $\psi, \eta, \phi$ is given by: 
\begin{align}
\mathcal{J}_{RL}({\theta}) = \int \mathcal{R}(\tau_{\mathcal{A}}\vert z) p_\theta(z\vert s_0)\rho(s_0) dz ds_0.
\end{align}
However, the formulation so far does not encourage diverse behaviour corresponding to different values of $z$ and all the values of $z$ could collapse to the same joint behaviour. To prevent this, we introduce a \emph{mutual information} (MI) objective between the observed trajectories $\boldsymbol{\tau}\triangleq \{(\mathbf{u}_t, s_t)\}$, which are representative of the joint behaviour and the latent variable $z$. 
The actions $\mathbf{u}_t$ in the trajectory are represented as a stack of agent utilities and $\sigma$ is an operator that returns a per-agent Boltzmann policy w.r.t.\ the utilities at each time step $t$, ensuring the MI objective is differentiable and helping train the network parameters ($\psi,\eta,\phi$). We use an RNN \citep{hochreiter1997long} to encode the entire trajectory and then maximise $MI(\sigma(\boldsymbol{\tau}), z)$. 
Intuitively, the MI objective encourages visitation of  diverse trajectories $\boldsymbol{\tau}$ while at the same time making them identifiable given $z$, thus elegantly separating the $z$ space into different exploration modes. The MI objective is:
\begin{align}
\mathcal{J}_{MI} &= \mathcal{H}(\sigma(\boldsymbol{\tau})) - \mathcal{H}(\sigma(\boldsymbol{\tau})\vert z) = \mathcal{H}(z) - \mathcal{H}(z\vert\sigma(\boldsymbol{\tau})),
\end{align}
where $\mathcal{H}$ is the entropy. However, neither the entropy of $\sigma(\boldsymbol{\tau})$ nor the conditional of $z$ given the former is tractable for nontrivial mappings, which makes directly using MI infeasible. Therefore, we introduce a variational distribution $q_\upsilon(z\vert \sigma(\boldsymbol{\tau}))$ \citep{wainwright2008graphical, bishop2006pattern} parameterised by $\upsilon$ as a proxy for the posterior over $z$, which provides a lower bound on $\mathcal{J}_{MI}$ (see \cref{proofMI}).
\begin{align}
\mathcal{J}_{MI} \geq  \mathcal{H}(z) + \mathbb{E}_{\sigma(\boldsymbol{\tau}), z}[\log(q_\upsilon(z\vert \sigma(\boldsymbol{\tau})))].\label{eq:varob}
\end{align}
We refer to the righthand side of the above inequality as the variational MI objective $\mathcal{J}_{V}(\upsilon,\phi,\eta,\psi)$.
The lower bound matches the exact MI when the variational distribution equals $p(z\vert \sigma(\boldsymbol{\tau}))$, the true posterior of $z$. The righthand side of \cref{fig:net} gives the network architectures corresponding to the variational MI loss. Since 
\begin{align}
\mathbb{E}_{\boldsymbol{\tau}, z}[\log(q_\upsilon(z\vert \sigma(\cdot)))] = & \mathbb{E}_{\boldsymbol{\tau}}[- KL(p(z\vert \sigma(\cdot))\vert \vert q_\upsilon(z\vert \sigma(\cdot))]- \mathcal{H}(z\vert \sigma(\cdot)),
\end{align}
where the nonnegativity of the KL divergence on the righthand side implies that a bad variational approximation can hurt performance as it induces a gap between the true objective and the lower bound \citep{mnih2014neural, barber2011bayesian}. This problem is especially important if $z$ is chosen to be continuous as for discrete distributions the posterior can be represented exactly as long as the dimensionality of $\upsilon$ is greater than the number of categories for $z$. The problem can be addressed by various state-of-the-art developments in amortised variational inference \citep{rezende2014stochastic,rezende2015variational}. The variational approximation can also be seen as a discriminator/critic that induces an auxiliary reward field $r_{aux}^z(\boldsymbol{\tau}) = \log(q_\upsilon(z\vert \sigma(\boldsymbol{\tau}))) - \log(p(z))$ on the trajectory space. Thus the overall objective becomes:
\begin{align}
\max_{\upsilon,\phi,\eta,\psi,\theta}  \mathcal{J}_{RL}({\theta}) + \lambda_{MI}\mathcal{J}_V(\upsilon,\phi,\eta,\psi) - \lambda_{QL}\mathcal{L}_{QL}(\phi,\eta,\psi), 
\end{align}
where $\lambda_{MI}, \lambda_{QL}$ are positive multipliers.
For training (see \cref{alg:maven}), at the beginning of each episode we sample an $x$ and obtain $z$ and then unroll the policy until termination and train $\psi, \eta, \phi, \upsilon$ on the $Q$-learning loss corresponding to greedy policy for the current exploration mode and the variational MI reward. The hierarchical policy parameters $\theta$ can be trained on the true task return using any policy optimisation algorithm. 
 At test time, we sample $z$ at the start of an episode and then perform a decentralised argmax on the corresponding $Q$-function to select actions. Thus, MAVEN achieves committed exploration while respecting QMIX's representational constraints.



\section{Experimental Results}	
\label{exps}
\vspace{-0.2 cm}
We now empirically evaluate MAVEN on various new and existing domains\footnote{Code available at \url{https://github.com/AnujMahajanOxf/MAVEN}}.
\vspace{-0.2 cm}
\subsection{$\mathbf{m}$-step matrix games}
\vspace{-0.2 cm}
To test the how nonmonotonicity and exploration interact, we introduce a simple $m$-step matrix game. The initial state is nonmonotonic, zero rewards lead to termination, and the differentiating states are located at the terminal ends; there are $m-2$ intermediate states. \cref{mstep} illustrates the $m$-step matrix game for $m=10$. The optimal policy is to take the top left joint action and finally take the bottom right action, giving an optimal total payoff of $m+3$. As $m$ increases, it becomes increasingly difficult to discover the optimal policy using $\epsilon$-dithering and a \textit{committed} approach becomes necessary. Additionally, the initial state's nonmonotonicity provides inertia against switching the policy to the other direction. \cref{mstepr} plots median returns for $m=10$.  QMIX gets stuck in a suboptimal policy with payoff $10$, while MAVEN successfully learns the true optimal policy with payoff $13$. This example shows how representational constraints can hurt performance if they are left unmoderated.
\begin{figure*}[!htb]
	\vspace{-0.3cm}
	\begin{center}
		\subfigure[]{
			\includegraphics[height=0.2\columnwidth]{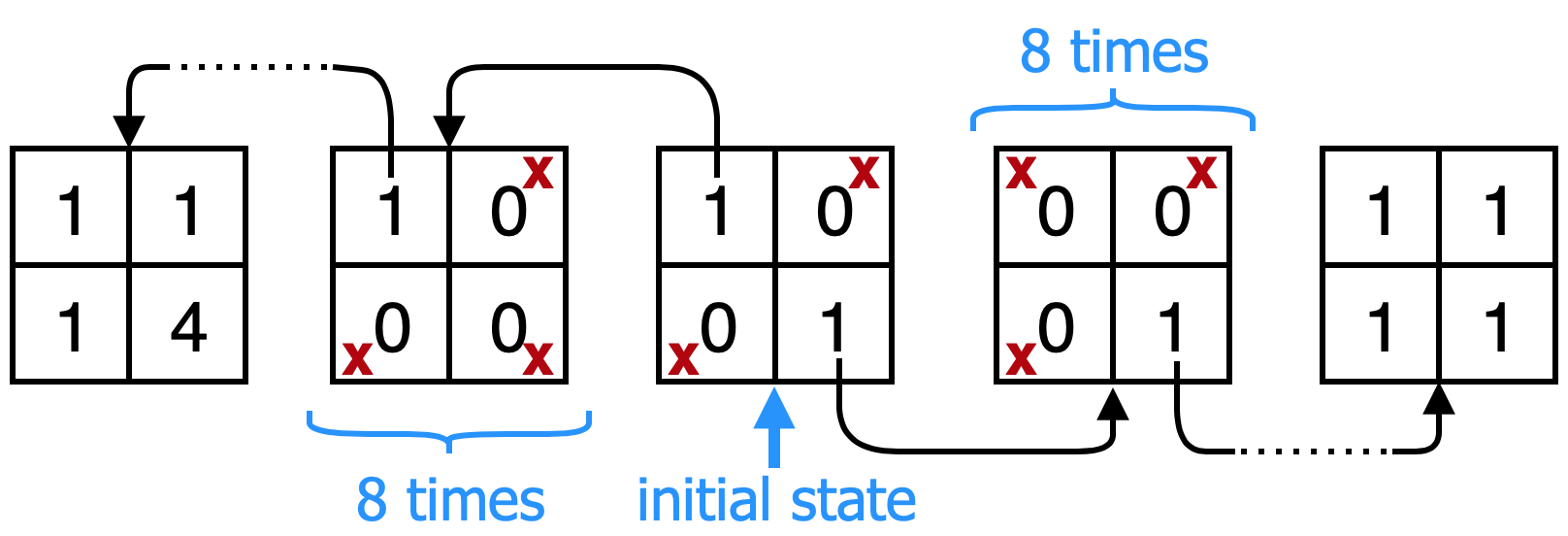}\label{mstep}}~~
		\subfigure[]{
			\includegraphics[height=0.2\columnwidth]{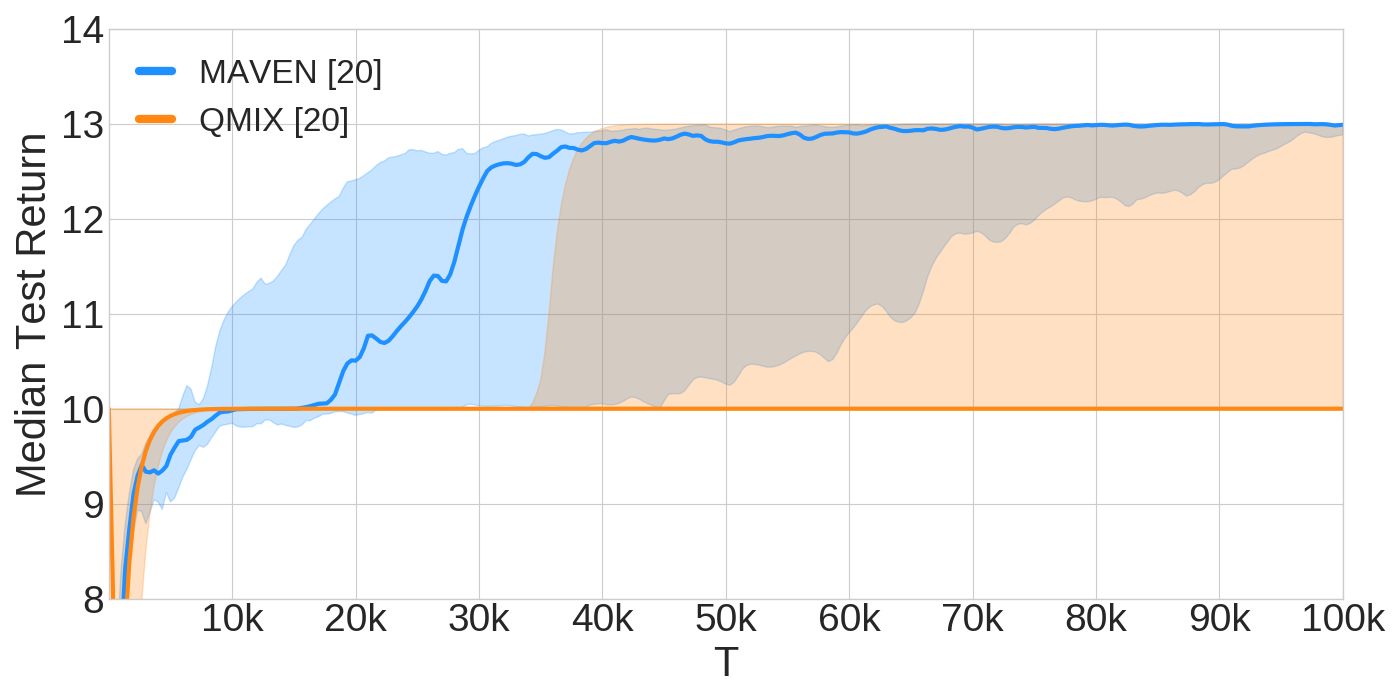}\label{mstepr}}
	\end{center}
	\vspace{-0.4cm}
	\caption{(a) $m$-step matrix game for $m=10$ case (b) median return of MAVEN and QMIX method on $10$-step matrix game for $100$k training steps, averaged over $20$ random initializations (2nd and 3rd quartile is shaded).}
	\vspace{-0.5cm}
\end{figure*}

\subsection{StarCraft II}
\paragraph*{StarCraft Multi-Agent Challenge}
We consider a challenging set of cooperative StarCraft II maps from the SMAC benchmark \citep{samvelyan2019starcraft} which Samvelyan et al.\ have classified as \textbf{Easy, Hard} and \textbf{Super Hard}. 
Our evaluation procedure is similar to \citep{rashid2018qmix, samvelyan2019starcraft}. We pause  training every $100000$ time steps and run $32$ evaluation episodes with decentralised greedy action selection. After training, we report the median \textit{test win rate} (percentage of episodes won) along with 2nd and 3rd quartiles (shaded in plots). We use grid search to tune hyperparameters. \cref{appendix-arch_training} contains additional experimental details.
\begin{figure}[h]
	\centering
	\subfigure[\texttt{corridor} \textbf{Super Hard}]{
		\includegraphics[width=0.32\columnwidth]{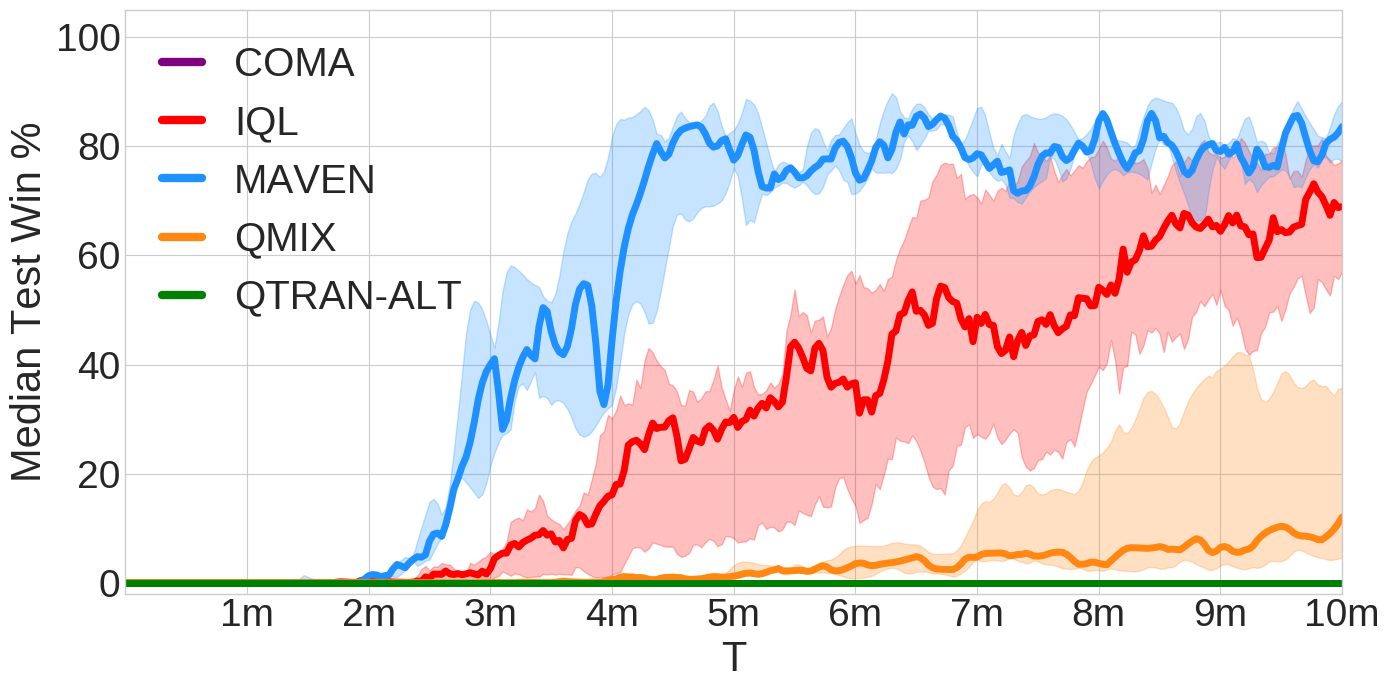}\label{corridor}}
	\subfigure[\texttt{6h\_vs\_8z} \textbf{Super Hard}]{
		\includegraphics[width=0.32\columnwidth]{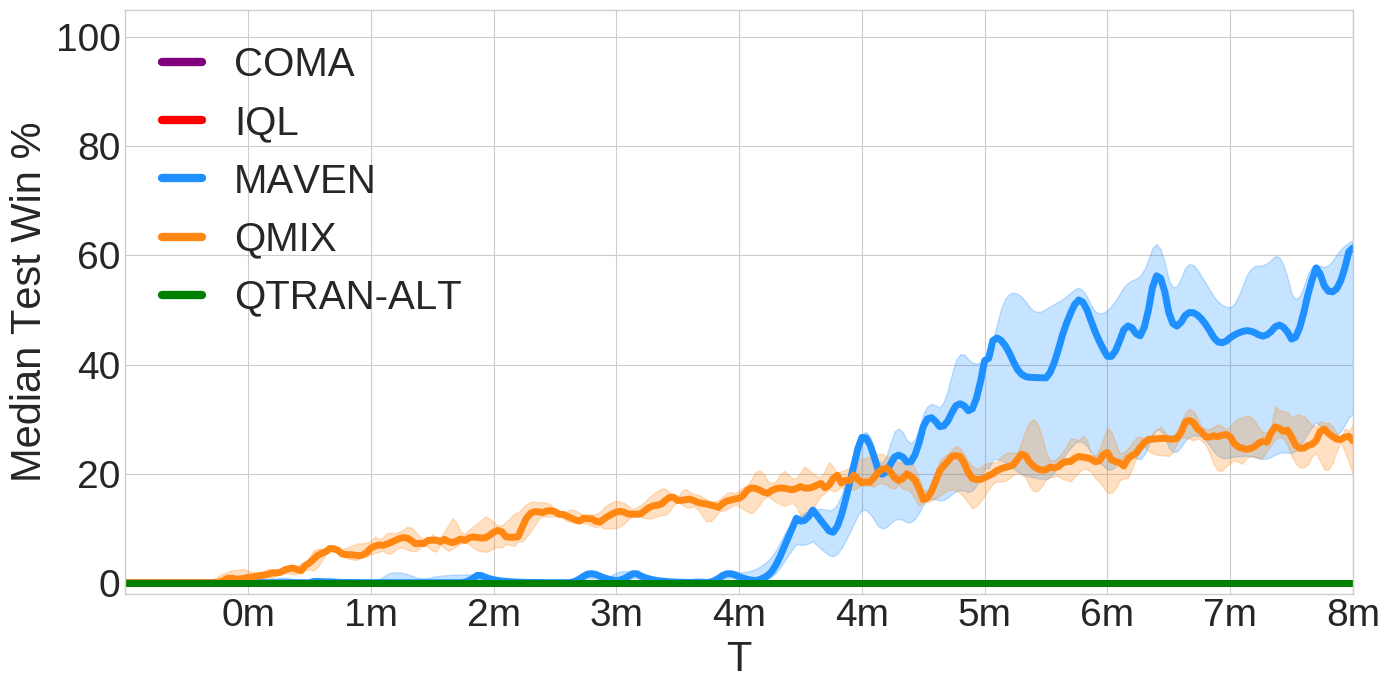}\label{focus}}
	\subfigure[\texttt{2s3z} \textbf{Easy}]{
		\includegraphics[width=0.32\columnwidth]{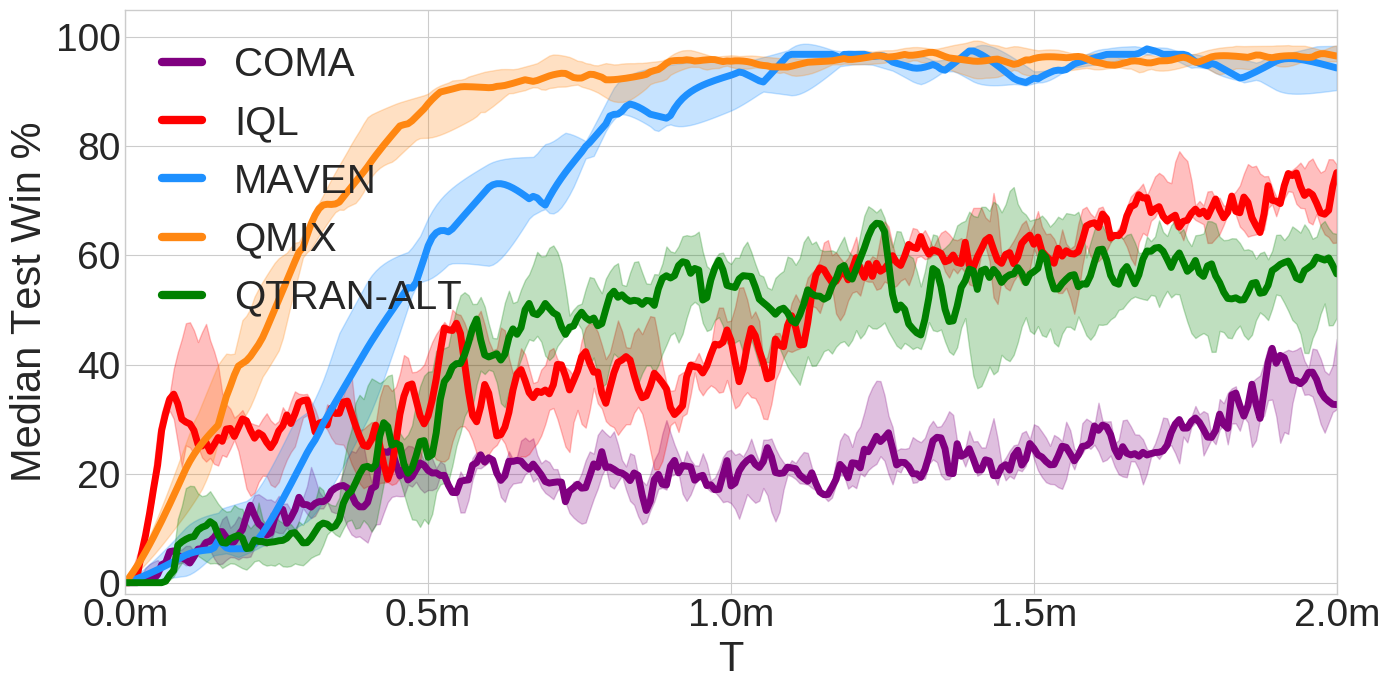}\label{2s3z}}
	\caption{The performance of various algorithms on three SMAC maps.}
	\label{smac}
	\vspace{-0.5cm}
\end{figure}
 We compare MAVEN, QTRAN, QMIX, COMA \citep{foerster2018counterfactual} and IQL \citep{tan1993multi} on several SMAC maps. 
Here we present the results for two \textbf{Super Hard} maps \texttt{corridor} \& \texttt{6h\_vs\_8z} and an \textbf{Easy} map \texttt{2s3z}. The \texttt{corridor} map, in which
6 Zealots face 24 enemy Zerglings, requires agents
to make effective use of the terrain features and  block enemy attacks from different directions. A properly \textit{coordinated} exploration scheme applied to this map would help the agents discover a suitable unit positioning quickly and improve performance. \texttt{6h\_vs\_8z} requires fine grained
'focus fire' by the allied Hydralisks. \texttt{2s3z} requires agents to learn ``focus fire" and interception. \cref{corridor,focus,2s3z} show the median win rates for the different algorithms on the maps; additional plots can be found in \cref{appendix_full_results}. The plots show that MAVEN performs substantially better than all alternate approaches on the \textbf{Super Hard} maps with performance similar to QMIX on \textbf{Hard} and \textbf{Easy} maps.Thus MAVEN performs better as difficulty increases. Furthermore, QTRAN does not yield satisfactory performance on most SMAC maps ($0\%$ win rate). The map on which it performs best is 2s3z (\cref{2s3z}), an \textbf{Easy} map, where it is still worse than QMIX and MAVEN. We believe this is because QTRAN enforces decentralisation using only relaxed L2 penalties that are insufficient for challenging domains.
\paragraph{Exploration and Robustness}
\begin{figure}[h]
	\vspace{-0.9cm}
	\centering
	\subfigure[\texttt{2\_corridors} ]{
		\includegraphics[width=0.25\columnwidth]{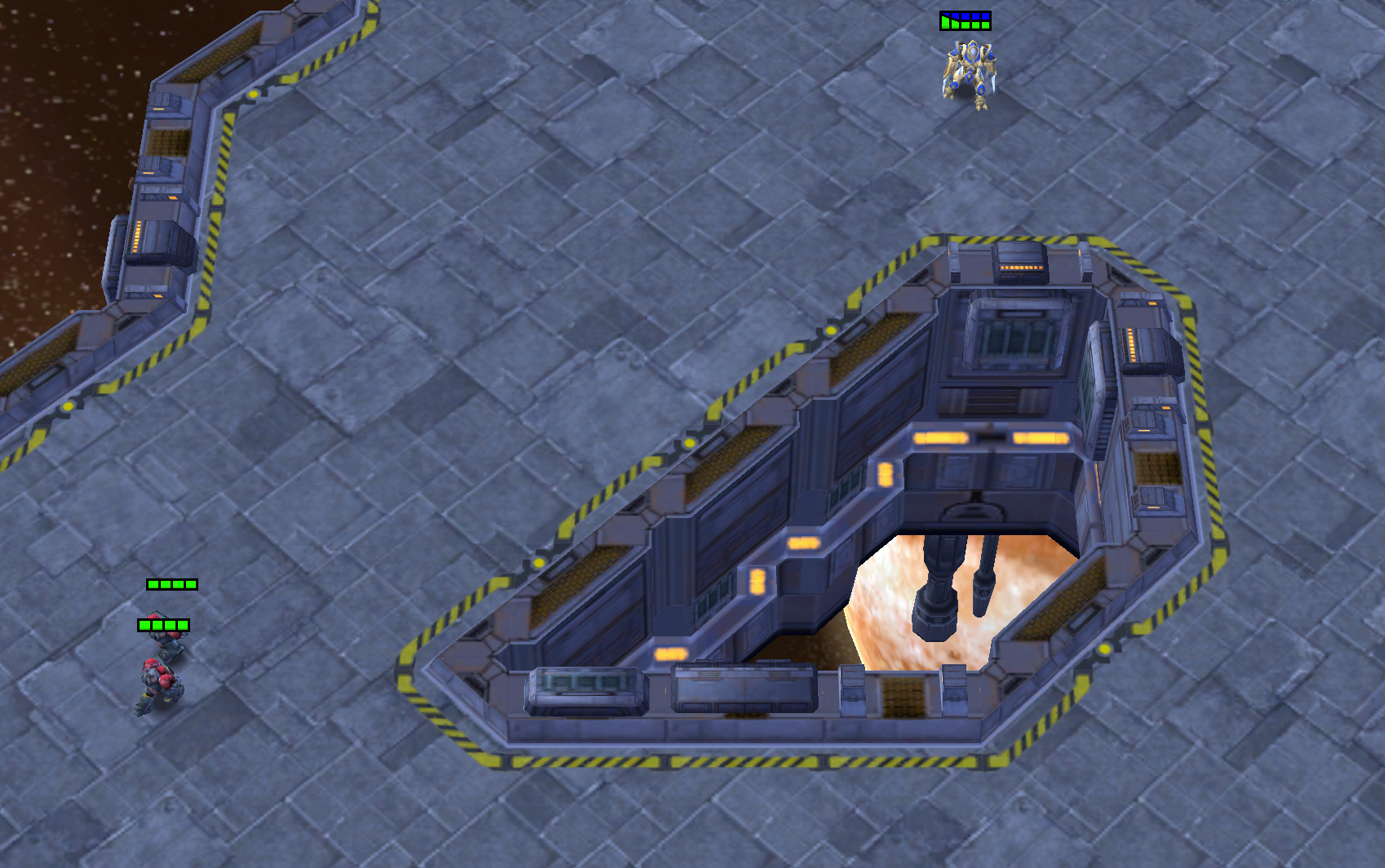}\label{2corss}}
	\subfigure[\texttt{Shorter corridor closed at $5$mil steps}]{
		\includegraphics[width=0.35\columnwidth]{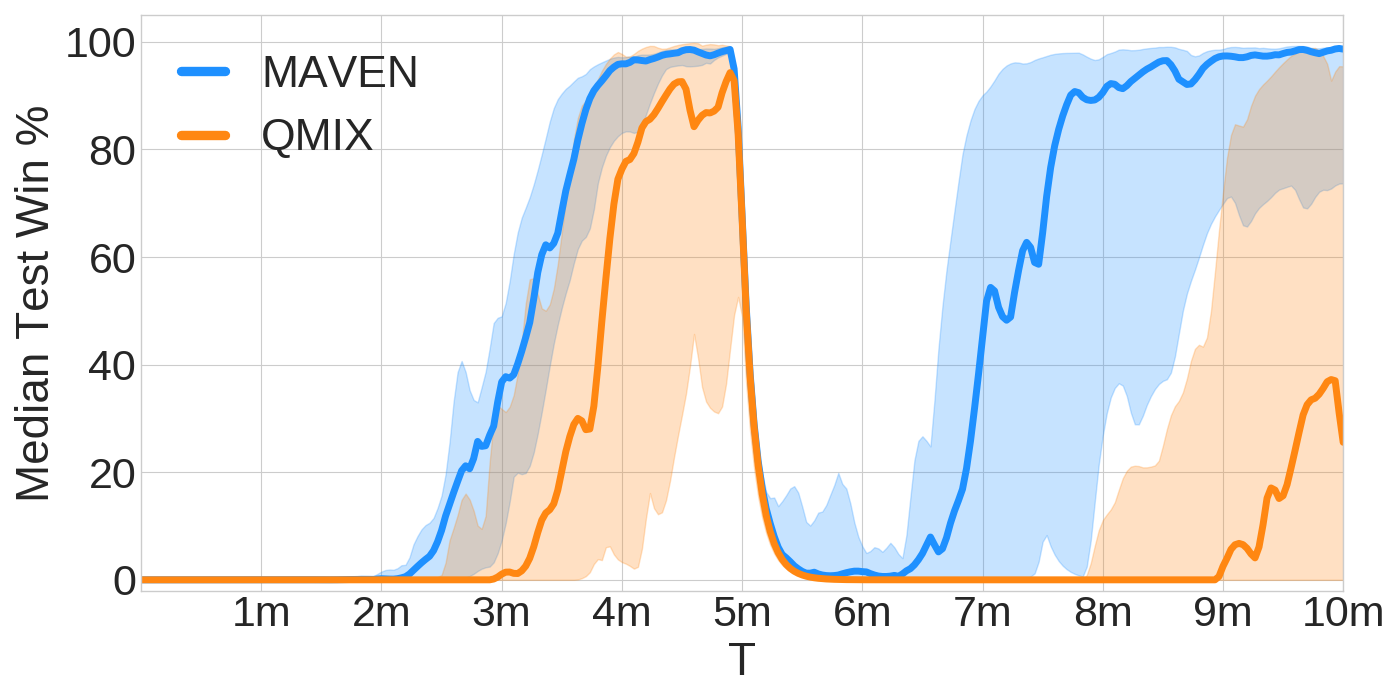}\label{2cor}}\\
	\subfigure[\texttt{zealot\_cave} ]{
		\includegraphics[width=0.25\textwidth]{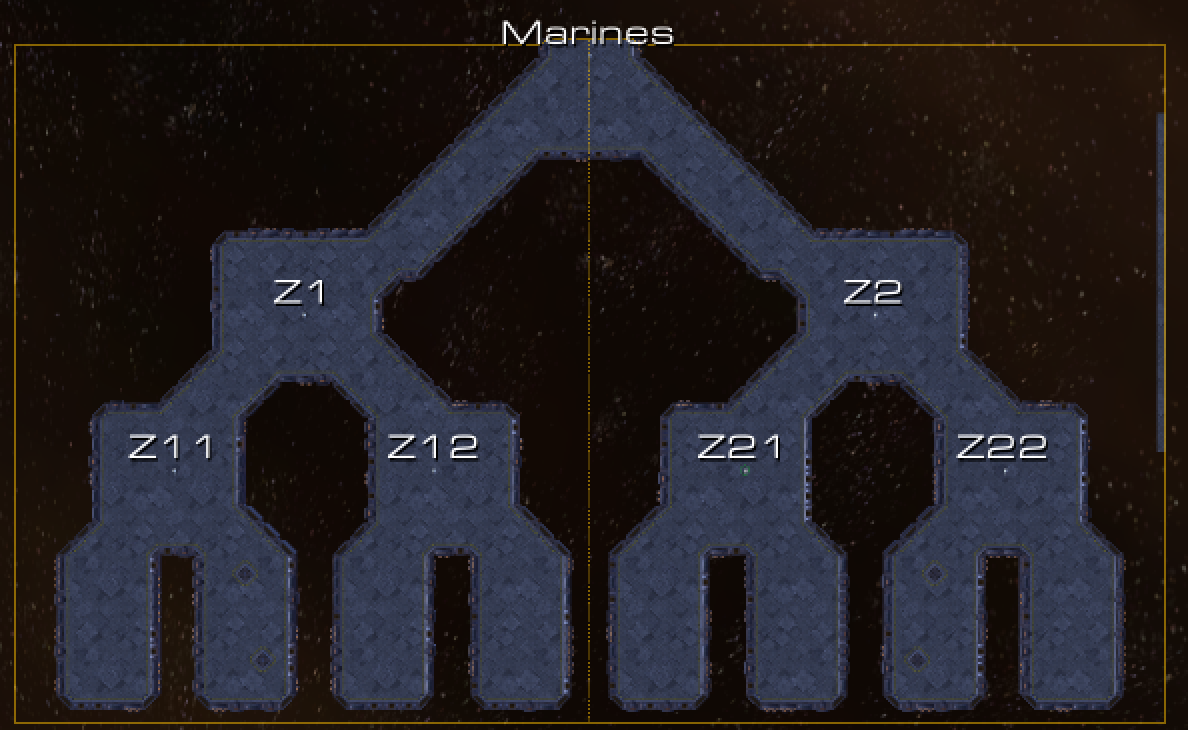}	\label{fig:zc}}
	\subfigure[\texttt{zealot\_cave depth 3}]{
		\includegraphics[width=0.35\columnwidth]{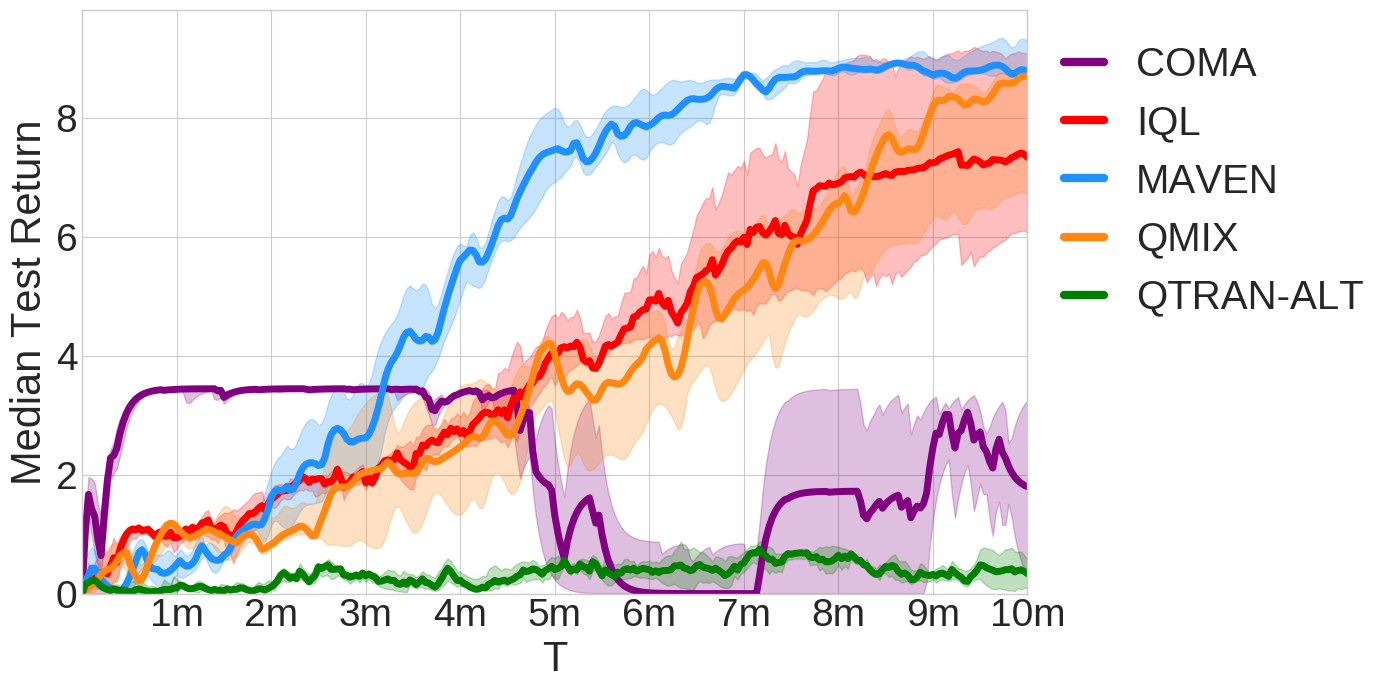}\label{3steps3}}
	\subfigure[\texttt{zealot\_cave depth 4}]{
		\includegraphics[width=0.35\columnwidth]{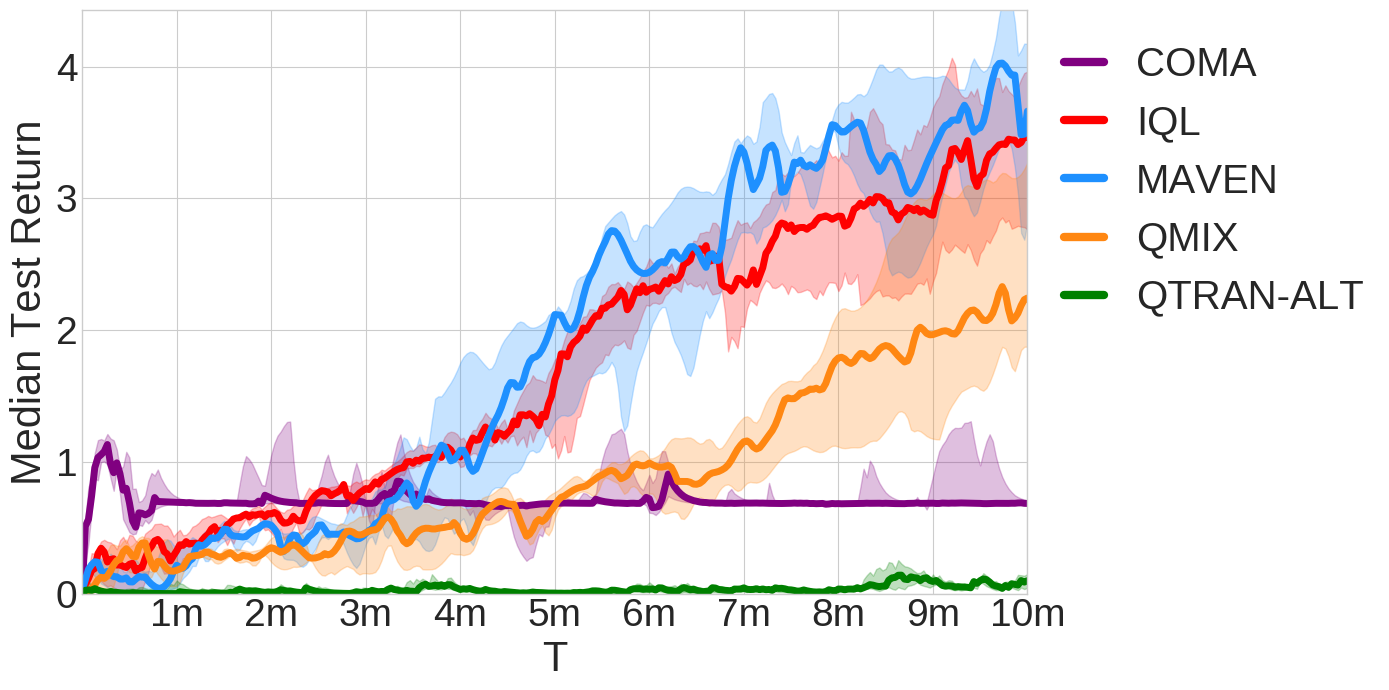}\label{3steps4}}
\caption{State exploration and policy robustness}
	\vspace{-0.4cm}
\end{figure}
Although SMAC domains are challenging, they are not specially designed to test state-action space exploration, as the units involved start engaging immediately after spawning. 
We thus introduce a new SMAC map designed specifically to assess the effectiveness of multi-agent exploration techniques and their ability to adapt to changes in the environment. The \texttt{2-corridors} map features two Marines facing an enemy Zealot.
In the beginning of training, the agents can make use of two corridors to attack the enemy (see \cref{2corss}). Halfway through training, the short corridor is blocked. This requires the agents to adapt accordingly and use the long corridor in a coordinated way to attack the enemy. \cref{2cor} presents the win rate for MAVEN and QMIX for \texttt{2-corridors} when the gate to short corridor is closed after $5$ million steps. While QMIX fails to recover after the closure, MAVEN swiftly adapts to the change in the environment and starts using the long corridor.
MAVEN's latent space allows it to explore in a \textit{committed} manner and associate use of the long corridor with a value of $z$. Furthermore, it facilitates recall of the behaviour once the short corridor becomes unavailable, which QMIX struggles with due to its representational constraints. 
We also introduce another new map called \texttt{zealot\_cave} to test state exploration, featuring a tree-structured cave with a Zealot at all but the leaf nodes (see \cref{fig:zc}). The agents consist of 2 marines who need to learn `kiting' to reach all the way to the leaf nodes and get extra reward only if they always take the right branch except at the final intersection. The depth of the cave offers control over the task difficulty. \cref{3steps3,3steps4} give the average reward received by the different algorithms for cave depths of 3 and 4. MAVEN outperforms all algorithms compared.
\vspace{-0.35cm}
\paragraph{Representability} 
\begin{wrapfigure}{r}{.6\textwidth}
	\vspace{-0.3 cm}
	\begin{minipage}{\linewidth}
		\centering
		\includegraphics[width=\columnwidth]{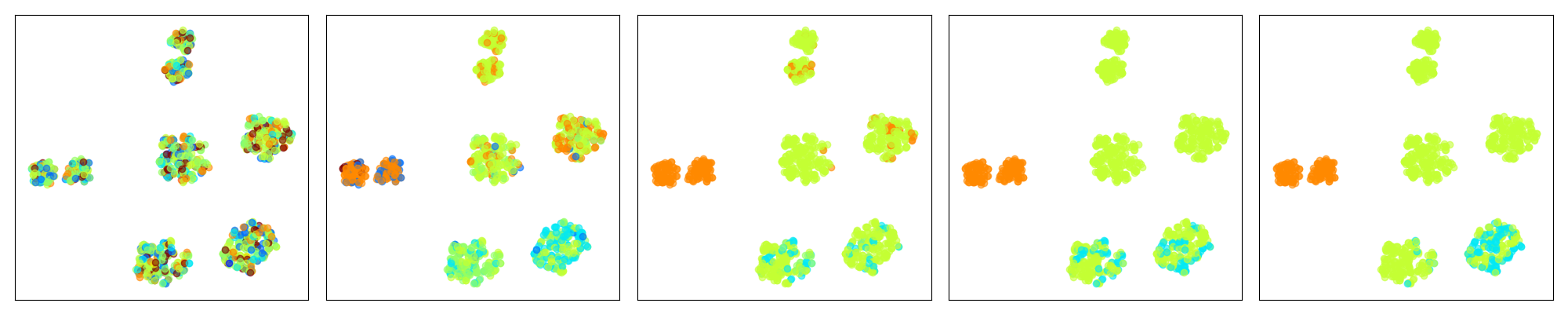}\\
		\includegraphics[width=0.99\columnwidth]{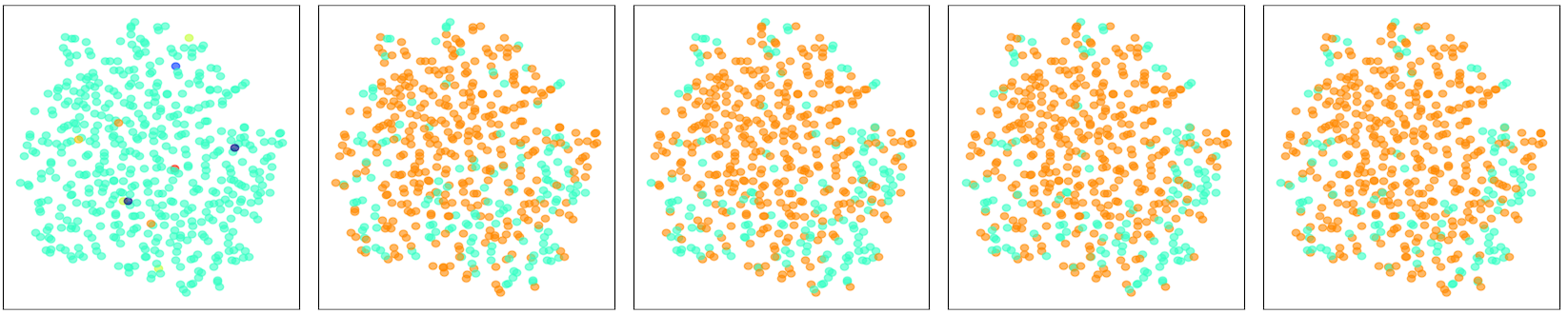}
	\end{minipage}
	\caption{tsne plot for $s_0$ labelled with $z$ ($16$ categories), \label{tsne} initial (left) to final (right), top  \texttt{3s5z}, bottom  \texttt{micro\_corridor}}
	\vspace{-0.4 cm}
\end{wrapfigure}
The optimal action-value function lies outside of the representation class of the CTDE algorithm used for most interesting problems. One way to tackle this issue is to find local approximations to the optimal value function and choose the best local approximation given the observation. We hypothesise that MAVEN enables application of this principle by mapping the latent space $z$ to local approximations and using the hierarchical policy to choose the best such approximation given the initial state $s_0$, thus offering better representational capacity while respecting the constraints requiring decentralization. To demonstrate this, we plot the t-SNE \citep{maaten2008visualizing} of the initial states and colour them according to the latent variable sampled for it using the hierarchical policy at different time steps during training. The top row of \cref{tsne} gives the time evolution of the plots for \texttt{3s5z} which shows that MAVEN learns to associate the initial state clusters with the same latent value, thus partitioning  the state-action space with distinct \textit{joint behaviours}. Another interesting plot in the bottom row for \texttt{micro\_corridor} demonstrates how MAVEN's latent space allows transition to more rewarding joint behaviour which existing methods would struggle to accomplish.
\paragraph{Ablations}
\begin{figure}[h]
\vspace{-0.7cm}
	\centering
	\subfigure[]{
		\includegraphics[width=0.35\textwidth]{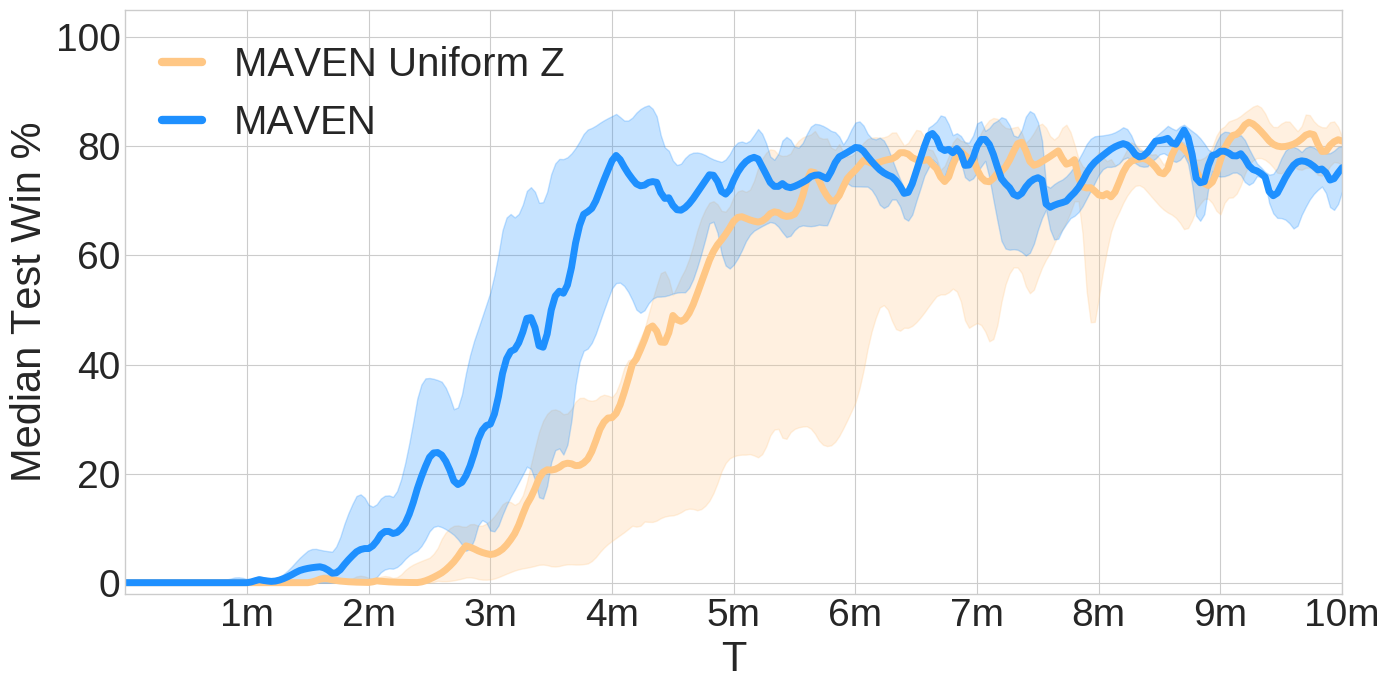}\label{uniform_h_policy}}
	\subfigure[]{
		\includegraphics[width=0.35\columnwidth]{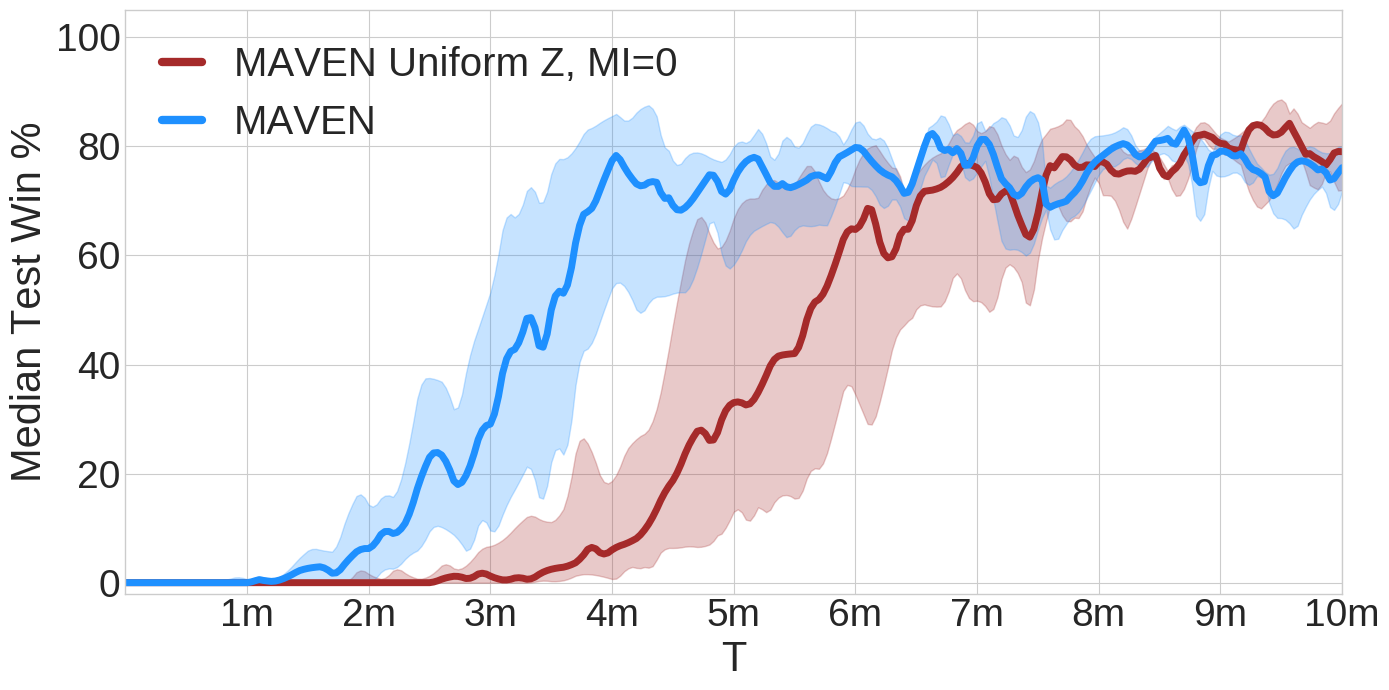}\label{bootstrap}}\\
	\subfigure[]{
		\includegraphics[width=0.35\columnwidth]{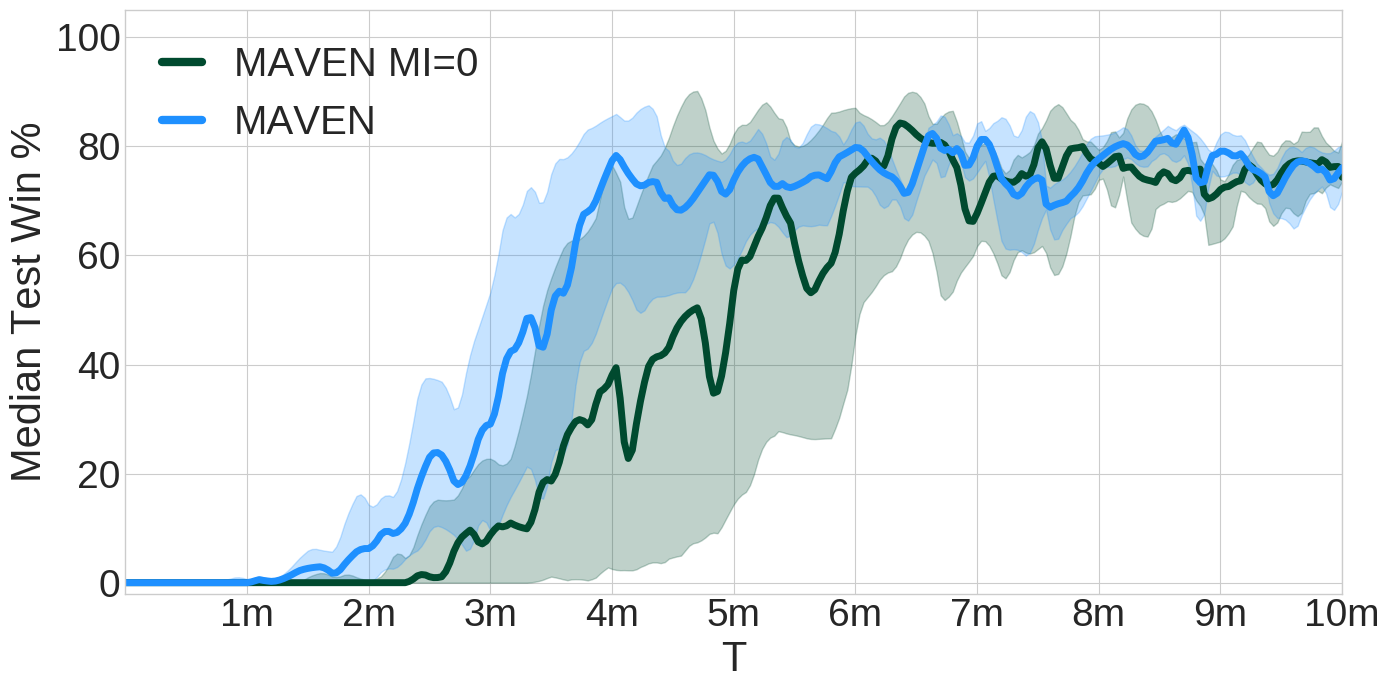}\label{no_diversity}}
	\subfigure[]{
		\includegraphics[width=0.35\columnwidth]{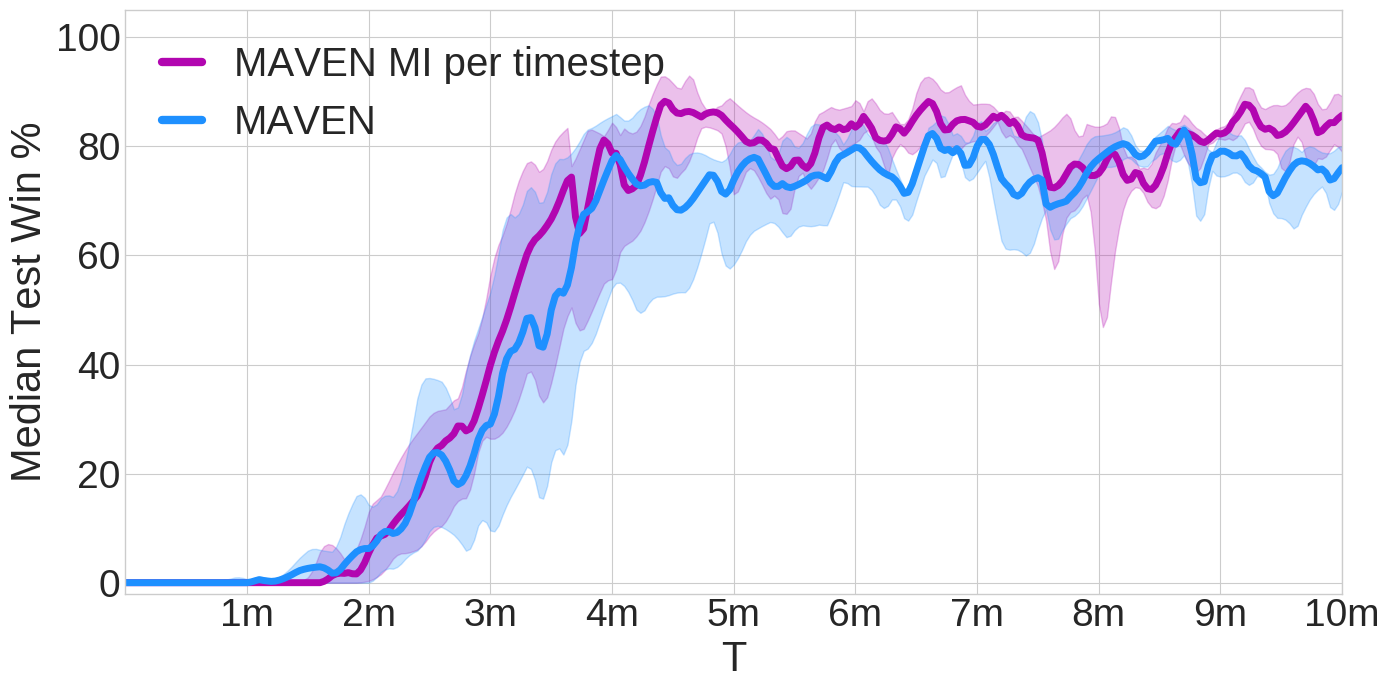}\label{MI_choice}}
\vspace{-0.3cm}
\caption{(a) \& (b) investigate uniform hierarchical policy. (c) \& (d) investigate effects of MI loss.}
\vspace{-0.4cm}
\end{figure}
We perform several ablations on the \texttt{micro\_corridor} scenario with $Z=16$ to try and determine the importance of each component of MAVEN.
We first consider using a fixed uniform hierarchical policy over $z$. \cref{uniform_h_policy} shows that MAVEN with a uniform policy over $z$ performs worse than a learned policy.
Interestingly, using a uniform hierarchical policy and no variational MI loss to encourage diversity results in a further drop in performance, as shown in \cref{bootstrap}.
Thus sufficient diversification of the observed trajectories via an explicit agency is important to find good policies ensuring sample efficiency.
\cref{bootstrap} is similar to  Bootstrapped-DQN \citep{osband2016deep}, which has no incentive to produce diverse behaviour other than the differing initialisations depending on $z$. Thus, all the latent variable values can collapse to the same \textit{joint behaviour}. 
If we are able to learn a hierarchical policy over $z$, we can focus our computation and environmental samples on the more promising variables, which allows for better final performance. \cref{no_diversity} shows improved performance relative to \cref{bootstrap} providing some evidence for this claim.
Next, we consider how the different choices of variational MI loss (\textit{per time step, per trajectory}) affect performance in \cref{MI_choice}. Intuitively, the per time step loss promotes a more spread out exploration as it forces the discriminator to learn the inverse map to the latent variable at each step. It thus tends to distribute its exploration budget at each step uniformly, whereas the trajectory loss allows the \textit{joint behaviours} to be similar for extended durations and take diversifying actions at only a few time steps in a trajectory, keeping its spread fairly narrow. However, we found that in most scenarios, the two losses perform similarly. 
See \cref{appendix_full_results} for additional plots and ablation results. 
\section{Related Work}
\vspace{-0.3 cm}
In recent years there has been considerable work extending MARL from small discrete state spaces that can be handled by tabular methods \citep{yang_multiagent_2004,busoniu_comprehensive_2008} to high-dimensional, continuous state spaces that require the use of function approximators \citep{foerster2018counterfactual,lowe2017multi,peng_multiagent_2017}.
To tackle computational intractability from exponential blow-up of state-action space, Guestrin et al.\ \citep{guestrin_multiagent_2002, guestrin2003efficient} use coordination graphs to factor large MDPs for multi-agent systems and propose inter-agent communication arising from message passing on the graphs. Similarly \citep{sukhbaatar2016learning,foerster2016learning,jiang2018learning} model inter-agent communication explicitly. In CTDE \citep{kraemer_multi-agent_2016}, \citep{tampuu_multiagent_2015} extend Independent $Q$-Learning \citep{tan1993multi} to use DQN to learn $Q$-values for each agent independently. \citep{foerster_stabilising_2017,omidshafiei_deep_2017} tackle the instability that arises from training the agents independently. Lin et al.\citep{lin2019cesma} first learn a centralised controller to solve the task, and then train the agents to imitate its behaviour. Sunehag et al.\ \citep{sunehag2017value} propose \textit{Value Decomposition Networks} (VDN), which learn the joint-action $Q$-values by factoring them as the sum of each agent's $Q$-values. QMIX \citep{rashid2018qmix} extends VDN to allow the joint action $Q$-values to be a monotonic combination of each agent's $Q$-Values that can vary depending on the state. Section \ref{sec:method} outlines how MAVEN builds upon QMIX. QTRAN \citep{son2019qtran} approaches the suboptimality vs.\ decentralisation tradeoff differently by introducing relaxed L2 penalties in the RL objective. 
\citep{guckelsberger2018new} maximise the \textit{empowerment} between one agents actions and the others future state in a competitive setting.
Zheng et al.\ \citep{zheng2018structured} allow each agent to condition their policies on a shared continuous latent variable. In contrast to our setting, they consider the fully-observable centralised control setting and do not attempt to enforce diversity across the shared latent variable. 
Aumann \citep{aumann1974subjectivity} proposes the concept of a \textit{correlated} equilibrium in non-cooperative multi-agent settings in which each agent conditions its policy on some shared variable that is sampled every episode.

In the single agent setting, Osband et al.\citep{osband2016deep} learn an ensemble of $Q$-value functions (which all share weights except for the final few layers) that are trained on their own sampled trajectories to approximate a posterior over $Q$-values via the statistical bootstrapping method. MAVEN without the MI loss and a uniform policy over $z$ is then equivalent to each agent using a Bootstrapped DQN. \citep{osband2017deep} extends the Bootstrapped DQN to include a prior. \citep{dimakopoulou2018coordinated} consider the setting of \textit{concurrent} RL in which multiple agents interact with their own environments in parallel. They aim to achieve more efficient exploration of the state-action space by seeding each agent's parametric distributions over MDPs with different seeds, whereas MAVEN aims to achieve this by maximising the mutual information between $z$ and a trajectory. 

Yet another direction of related work lies in 
defining intrinsic rewards for single agent hierarchical RL that enable learning of diverse behaviours for the low level layers of the hierarchical policy. Florensa et al.\ \citep{florensa2017stochastic} use hand designed state features and train the lower layers of the policy by maximising MI, and then tune the policy network's upper layers for specific tasks. Similarly \citep{gregor2016variational, eysenbach2018diversity} learn a mixture of diverse behaviours using deep neural networks to extract state features and use MI maximisation between them and the behaviours to learn useful skills without a reward function. MAVEN differs from DIAYN \citep{eysenbach2018diversity} in the use case, and also enforces \textit{action diversification} due to MI being maximised jointly with states and actions in a trajectory. Hence, agents jointly learn to solve the task is many different ways; this is how MAVEN prevents suboptimality from representational constraints, whereas DIAYN is concerned only with discovering new states. Furthermore, DIAYN trains on diversity rewards using RL whereas we train on them via gradient ascent. Haarnoja et al.\ \citep{haarnoja2018latent} use normalising flows \citep{rezende2015variational} to learn hierarchical latent space policies using max entropy RL \citep{todorov2007linearly,ziebart2008maximum,fellows2018virel}, which is related to MI maximisation but ignores the variational posterior over latent space behaviours. In a similar vein \citep{houthooft2016vime,pathakICMl17curiosity} use auxiliary rewards to modify the RL objective towards a better tradeoff between exploration and exploitation.
\vspace{-0.3cm}	
\section{Conclusion and Future work}
\vspace{-0.3 cm}
In this paper, we analysed the effects of representational constraints on exploration under CTDE. We also introduced MAVEN, an algorithm that enables committed exploration while obeying such constraints. As immediate future work, we aim to develop a theoretical analysis similar to QMIX for other CTDE algorithms. We would also like to carry out empirical evaluations for MAVEN when $z$ is continuous. To address the intractability introduced by the use of continuous latent variables, we propose the use of state-of-the-art methods from variational inference \citep{kingma2014stochastic, rezende2014stochastic, rezende2015variational, kingma2016improved}.
Yet another interesting direction would be to condition the latent distribution on the joint state space at each time step and transmit it across the agents  to get a low communication cost, centralised execution policy and compare its merits to existing methods \citep{sukhbaatar2016learning,foerster2016learning,jiang2018learning}. 
\section{Acknowledgements}
\vspace{-0.3 cm}
AM is generously funded by the Oxford-Google
DeepMind Graduate Scholarship and Drapers Scholarship. TR is funded by the Engineering and Physical Sciences Research Council (EP/M508111/1, EP/N509711/1). This project has received funding from the European Research Council under the European Union’s Horizon 2020 research and innovation programme (grant agreement number 637713). The experiments were made possible by generous equipment grant by NVIDIA and
cloud credit grant from Oracle Cloud Innovation Accelerator. 	
\bibliography{final_maven}
\bibliographystyle{plain}
\newpage
\appendix
\section{QMIX Architecture}
\label{app:qmix_arch}

\begin{figure*}[h!tb]
	\centering
	\includegraphics[width=\textwidth]{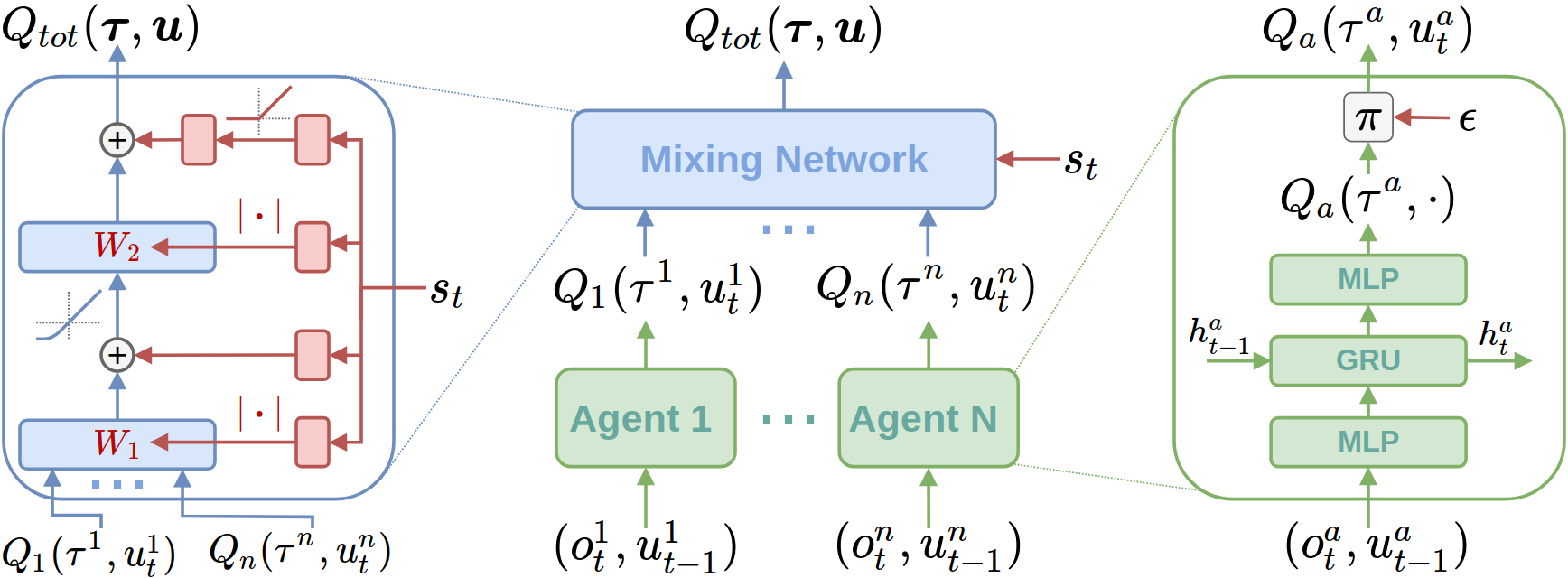}
	\text{~~~~~~~~~~~~~~~~~~~~~~~}(a) \hfill (b) \hfill (c) \text{~~~~~~~~~~~~~~~~~~~~}
	\caption{The overall setup of QMIX, reproduced from the original paper \citep{rashid2018qmix} (a) Mixing network structure. In red are the hypernetworks that produce the weights and biases for mixing network layers shown in blue. (b) The overall QMIX architecture. (c) Agent network structure. Best viewed in colour.}
	\label{fig:QMIX}
\end{figure*}

\section{Proofs}
\addtocounter{theorem}{-2}
\begin{subsection}{Uniform visitation}
	\label{proof:uv}
	\begin{theorem} 
		For $n$ player, $k\geq 3$ action matrix games ($|\mathcal{A}|=n, |U|=k$), under uniform visitation; $Q_{qmix}$ learns a $\delta$-suboptimal policy for any time horizon $T$, for any $0<\delta\leq R\Big[\sqrt{\frac{a(b+1)}{a+b}}-1\Big]$ for the payoff matrix given by the template below, where $b = \sum_{s=1}^{k-2} {{n+s-1}\choose s}$, $a = k^n - (b+1)$, $R>0$:
		\begin{align}
		\begin{bmatrix} 
		R+\delta & 0 & \dotsc & R \\ 
		0 & &\iddots & \\
		\vdots &\iddots  & & \vdots \\
		R &   \dotsc && R
		\end{bmatrix}
		\end{align}
	\end{theorem}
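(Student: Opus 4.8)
The plan is to reduce the statement to a least-squares approximation problem on the monotone/IGM-representable class and then compare two explicit representable fits. For a single-stage matrix game under uniform visitation the bootstrapped QMIX target is simply the payoff $r(\mathbf u)$, so the $Q_{qmix}$ that QMIX settles on is (a stationary point of) the uniformly weighted squared error $\sum_{\mathbf u}\big(Q_{qmix}(\mathbf u)-r(\mathbf u)\big)^2$ over all $Q_{qmix}(\mathbf u)=f\big(q_1(u^1),\dots,q_n(u^n)\big)$ with $f$ coordinatewise nondecreasing. By the decentralisability (IGM) property the greedy joint action induced by such a $Q_{qmix}$ is $\big(\argmax_{u}q_1(u),\dots,\argmax_{u}q_n(u)\big)$; since for the template payoff the action $(1,\dots,1)$ is the unique maximiser with value $R+\delta$ and every other joint action has value $\le R$, the learned policy is $\delta$-suboptimal for every horizon $T$ precisely when the converged $q_i$'s are not all maximised at action $1$. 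So it suffices to show that any representable fit whose greedy action is $(1,\dots,1)$ is weakly beaten by some representable fit whose greedy action is the all-$k$ action.

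First I would record the combinatorics of the template. Writing $\ell(\mathbf u)=\sum_i(u^i-1)$, the payoff is $R+\delta$ on the single level-$0$ action $(1,\dots,1)$, is $0$ on the $b=\sum_{s=1}^{k-2}\binom{n+s-1}{s}$ actions with $1\le\ell\le k-2$, and is $R$ on the remaining $a=k^n-(b+1)$ actions with $\ell\ge k-1$. Then I would build the two candidates. (i) A fit pointed at the corner: choose the $q_i$ and $f$ so that $(1,\dots,1)$ gets value $v$ and all other joint actions get a common value $c\le v$ (take $q_i(1)=1$, $q_i(u)=0$ for $u\ge2$, and $f$ equal to $v$ at the all-ones vertex and $c$ elsewhere — this is monotone and has greedy action $(1,\dots,1)$ once $v>c$). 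Its error is $(v-R-\delta)^2+bc^2+a(c-R)^2$, optimised by $v=R+\delta$, $c=\tfrac{aR}{a+b}$, giving error $\tfrac{R^2ab}{a+b}$. The delicate part is to argue this is essentially optimal among all corner-greedy fits: since then every $q_i$ is maximised at action $1$, $Q_{qmix}$ is monotone ``towards the corner'', which ties each reward-$R$ entry (obtained from a zero-payoff entry by raising an action-$1$ coordinate) to a no-larger fitted value than that zero-payoff entry; a pooling/collapse argument then forces the error of any corner-greedy fit to be at least $\tfrac{R^2ab}{a+b}$. (ii) An additive fit pointed at $(k,\dots,k)$: take $q_i(u)=u$ and a saturating monotone mixer, so that all level-$\ge k-1$ actions share one value $R$ and all level-$\le k-2$ actions share a value $c$; fitting gives $c=\tfrac{R+\delta}{b+1}$ and error $\tfrac{(R+\delta)^2b}{b+1}$, while its greedy action has true value $R$.

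Finally I would compare the two: $\tfrac{(R+\delta)^2b}{b+1}\le\tfrac{R^2ab}{a+b}$ iff $(R+\delta)^2\le R^2\tfrac{a(b+1)}{a+b}$ iff $0<\delta\le R\big[\sqrt{a(b+1)/(a+b)}-1\big]$. In this range the all-$k$ fit weakly dominates every corner-greedy fit, so the solution QMIX converges to cannot have $(1,\dots,1)$ as its greedy action; hence its greedy policy attains value $\le R$ and is $\delta$-suboptimal, independently of $T$ (the horizon only rescales returns, not which action is greedy).

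The step I expect to be the main obstacle is the lower bound inside candidate (i): proving that insisting on $(1,\dots,1)$ being greedy forces mean-squared error at least $\tfrac{R^2ab}{a+b}$. This requires a careful inventory of exactly which entries the monotonicity-towards-the-corner constraint orders the ``wrong'' way relative to their targets, together with the bookkeeping of the counts $a=k^n-\binom{n+k-2}{k-2}$ and $b=\sum_{s=1}^{k-2}\binom{n+s-1}{s}$. A secondary subtlety is justifying that the point QMIX's gradient updates converge to is the relevant (constrained) least-squares projection rather than some other stationary point of a non-convex landscape; one may need to restrict attention to the natural symmetric/aligned stationary points or to appeal to the observed behaviour of the learning dynamics.
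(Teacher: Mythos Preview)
Your proposal is essentially the same argument as the paper's, arriving at the identical two candidate errors $\tfrac{R^2ab}{a+b}$ and $\tfrac{(R+\delta)^2b}{b+1}$ and the same comparison. The only real difference is how the ``main obstacle'' you flag is handled. The paper does not try to lower-bound the error of an arbitrary corner-greedy fit by a pooling/monotonicity-towards-the-corner argument; instead it first invokes a symmetry-and-exchange reduction to a three-parameter family (value $x_3$ at the corner, $x_2$ on the zero band, $x_1$ on the $R$ band), and then poses two convex quadratic programs, $M1$ with $x_1\ge x_2\ge x_3$ and $M2$ with $x_1\le x_2\le x_3$, solving each exactly via KKT. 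The optimal $M2$ solution has $x_1=x_2=\tfrac{aR}{a+b}$, $x_3=R+\delta$ (which coincides with your candidate~(i)), and the optimal $M1$ solution has $x_1=R$, $x_2=x_3=\tfrac{R+\delta}{b+1}$ (your candidate~(ii)); so in the paper's route the lower bound you worry about is simply the optimal value of the $M2$ program, not a separate argument. Your two-value ansatz for candidate~(i) happens to hit the $M2$ optimum exactly because the constraint $x_2\ge x_1$ binds there.

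Regarding your secondary subtlety about whether QMIX's dynamics actually land on this constrained least-squares projection rather than some other stationary point: the paper does not address this either; it simply asserts that under uniform visitation QMIX ``can be seen as minimising the mean squared error'' and proceeds from there. So your concern is legitimate but is treated as a modelling assumption in the paper as well.
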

	\begin{proof}
		For single state MDPs, under uniform visitation of the joint state-action space, QMIX can be seen as minimising the mean squared error between the actual $Q$-values and the monotonic projection $Q_{qmix}$. Using the symmetry of the problem and an exchange argument, it can be shown that only the monotonic projections of the following form need to be considered:
		\begin{align}
		\begin{bmatrix} 
		x_3& x_2 & \dotsc & x_1 \\ 
		x_2 & &\iddots & \\
		\vdots &\iddots  & & \vdots \\
		x_1 &   \dotsc && x_1
		\end{bmatrix}
		\end{align}
		where $X \triangleq (x_1, x_2, x_3)$. Consequently, there are two cases for the monotonic approximations. We refer to them as $M1$ and $M2$ corresponding to $x_1\geq x_2\geq x_3$ and $x_1\leq x_2\leq x_3$ cases respectively. The optimization problem for $M1$ is:
		\begin{align}
		&M1:\\
		&\underset{X}{\text{minimise}} && a(x_1-R)^2 + bx_2^2 + (x_3-(R+\delta))^2\\
		&s.t.
		&& x_2-x_1\leq 0 \\
		&&& x_3-x_2\leq 0
		\end{align}
		where $b = \sum_{s=1}^{k-2} {{n+s-1}\choose s}$, $a = k^n - (b+1)$ are the coefficients corresponding to the number of entries for the general $n$ player, $k$ action game (having $k^n$ entries). It is thus evident that the above problem is a quadratic program and is indeed convex \citep{boyd2004convex} as the Hessian of the objective $diag(a,b,1)$ is positive definite. The Largrangian is given by:
		\begin{align}
		\mathcal{L}(X,\lambda_1,\lambda_2) = a(x_1-R)^2 + bx_2^2 + (x_3-(R+\delta))^2 + \lambda_1(x_2-x_1) +\lambda_2(x_3-x_2)
		\end{align}
		where $\lambda_1,\lambda_2$ are the dual variables. Moreover, the above problem also satisfies Slater's conditions which implies that KKT conditions are necessary and sufficient for finding the primal and dual optimal. By setting $\nabla_X \mathcal{L} = 0$, we get:
		\begin{align}
		\lambda_1 &= 2a(x_1-R)\\
		\lambda_2 &= 2a(R+\delta -x_3)\\
		x_2 &= \frac{\lambda_2-\lambda_1}{2b}
		\end{align} 
		Using primal and dual feasibility constraints along with complementary slackness, we can see that $x_1=R, x_2=x_3 = \frac{R+\delta}{1+b}$ is an optimal solution to $M1$ for $\delta\leq bR$ with the optimal value for the problem as $OPT(M1) = \frac{b(R+\delta)^2}{b+1}$. By solving $M2$ in a similar way for the reversed primal constraints $ x_2-x_1\geq 0, x_3-x_2\geq 0$, we see that an optimal assignment is $x_1 = x_2 = \frac{aR}{b+a}, x_3 = R+\delta$ with the optimal value given by $OPT(M2) = \frac{R^2ab}{a+b}$. Note that the solution to $M1$ corresponds to the suboptimal policy of picking action corresponding to payoff $R$, whereas the solution to $M2$ corresponds to that of picking the optimal action with payoff $R+\delta$ (as QMIX picks the action corresponding to the maximal entry of a monotonic projection). For QMIX to learn the suboptimal policy corresponding to $M1$, we require that $OPT(M1)\leq OPT(M2)$. Consequently,
		\begin{align}
		&\frac{b(R+\delta)^2}{b+1} \leq \frac{R^2ab}{a+b}\\
		\implies &\delta \leq R\Big[\sqrt{\frac{a(b+1)}{a+b}}-1\Big] \label{bound}
		\end{align}  
	\end{proof}
\end{subsection}
	\begin{subsection}{$\epsilon$-greedy visitation}
		\label{proof:ev}
		\begin{theorem} 
			 For $n$ player, $k\geq 3$ action matrix games, under $\epsilon$-greedy visitation $\epsilon(t)$; $Q_{qmix}$ learns a $\delta$-suboptimal policy for any time horizon $T$ with probability $\geq 1-\Big(\exp(-\frac{T\upsilon^2}{2}) + (k^n -1)\exp(-\frac{T\upsilon^2}{2(k^n-1)^2})\Big)$ , for any $0<\delta\leq R\Bigg[\sqrt{a\Big(\frac{\upsilon b}{2(1-\upsilon/2)(a+b)}+1\Big)}-1\Bigg]$ for the payoff matrix given by the template above, where $b = \sum_{s=1}^{k-2} {{n+s-1}\choose s}$, $a = k^n - (b+1)$, $R>0$ and $\upsilon = \epsilon(T)$.
		\end{theorem}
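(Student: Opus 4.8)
The plan is to run the argument of \cref{uv} (see \cref{proof:uv}) with the uniform entry weights $a,b,1$ replaced by the random visitation counts that $\epsilon$-greedy play actually produces, and then to control those counts with a Hoeffding-type concentration bound. Recall that for a single-state matrix game QMIX trained to convergence realises the \emph{visitation-weighted} monotonic projection $\min_{Q_{qmix}}\sum_{\mathbf{u}} N_T(\mathbf{u})\,(Q_{qmix}(\mathbf{u})-Q(\mathbf{u}))^2$ of the payoff matrix, where $N_T(\mathbf{u})$ is the number of times the joint action $\mathbf{u}$ has been played up to horizon $T$. Exactly as in \cref{proof:uv}, the symmetry of the payoff template together with an exchange argument reduces the candidate projections to the three-parameter family $X=(x_1,x_2,x_3)$ and to the two KKT branches $M1$ ($x_1\ge x_2\ge x_3$, QMIX committing to a payoff-$R$ entry) and $M2$ ($x_1\le x_2\le x_3$, QMIX selecting the optimal payoff-$(R+\delta)$ entry); the only change is that the squared-error coefficients of the three groups of entries are now the empirical frequencies $w_R$ (mass on the $a$ payoff-$R$ entries), $w_0$ (mass on the $b$ zero entries) and $w_{R+\delta}$ (mass on the single corner) rather than $a/k^n,\,b/k^n,\,1/k^n$.

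Next I would bound these frequencies. Conditioning on the event that QMIX's greedy joint action has so far remained on a payoff-$R$ entry — the $M1$ regime, which is self-consistent because the weights it induces keep $M1$ optimal ($M1$ fits the heavily visited greedy entry exactly with $x_1=R$, whereas $M2$ is forced to $x_1\le x_2\le x_3$ and pays on that entry) — the optimal corner is reached only through exploration, and $\epsilon$-greedy exploration spreads its mass over all $k^n$ entries, predominantly onto the numerous payoff-$R$ entries. Writing each $N_T(\mathbf{u})$ as a sum of $T$ indicator variables and applying Hoeffding's inequality to the greedy entry's count with tolerance $\upsilon/2$ gives, with probability at least $1-\exp(-T\upsilon^2/2)$, that its frequency is within $\upsilon/2$ of its mean; a union bound over the remaining $k^n-1$ entries, each controlled with tolerance $\upsilon/(2(k^n-1))$, gives, with probability at least $1-(k^n-1)\exp(-T\upsilon^2/(2(k^n-1)^2))$, simultaneous control of all of their frequencies. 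Intersecting the two events produces the claimed failure probability $1-\big(\exp(-T\upsilon^2/2)+(k^n-1)\exp(-T\upsilon^2/(2(k^n-1)^2))\big)$, and on it one obtains $w_{R+\delta}\le \upsilon/2$, $w_R+w_0\ge 1-\upsilon/2$, with the internal split of the bulk between payoff-$R$ and zero entries still in the ratio $a:b$.

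On that event I would re-solve the two quadratic programs $M1$ and $M2$ with these perturbed coefficients — they remain strictly convex, Slater's condition holds, so the KKT conditions are again necessary and sufficient — obtain closed forms for $OPT(M1)$ and $OPT(M2)$ as in \cref{proof:uv}, and impose $OPT(M1)\le OPT(M2)$; substituting the weight bounds above and simplifying yields $\delta\le R\big[\sqrt{a\big(\tfrac{\upsilon b}{2(1-\upsilon/2)(a+b)}+1\big)}-1\big]$, the stated range. As sanity checks, $\upsilon\to 0$ gives $\delta\le R(\sqrt a-1)$, which is larger than the uniform bound of \cref{uv} (scarce exploration makes the corner negligible in the fit), and the bound is increasing in $\upsilon$, reflecting that spreading more exploration mass onto the payoff-$R$ entries damages the optimal projection $M2$ more than it damages $M1$ — the ``more exploration, more failure'' phenomenon.

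The step I expect to be the main obstacle is the concentration step: making the coupling between the random visitation counts and the simultaneously evolving QMIX greedy policy rigorous. One must close the induction that the ``$M1$ regime'' is self-sustaining along the whole trajectory — i.e., that the weights it induces never tip the weighted projection into $M2$ — and handle degenerate cases such as ties in the decentralised per-agent $\argmax$; one must also justify that the relevant visitation scale is $\upsilon=\epsilon(T)$ even though $\epsilon(t)$ is only assumed non-increasing (e.g.\ by arguing that the fit at horizon $T$ is governed by the recent, rate-$\upsilon$ visitation distribution, or by separately bounding the pre-$T$ exploration mass). The concentration itself is then routine once the tolerances ($\upsilon/2$ for the greedy count, $\upsilon/(2(k^n-1))$ for each of the other $k^n-1$ entries) are chosen so that the two tail terms combine into the bound in the statement.
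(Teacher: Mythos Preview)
Your plan matches the paper's structurally—reweight the quadratic programme of \cref{proof:uv} by visitation counts, re-solve $M1,M2$ via KKT, impose $OPT(M1)\le OPT(M2)$, and control the counts with Hoeffding plus a union bound; the tolerances you pick, $\upsilon/2$ on one distinguished entry and $\upsilon/(2(k^n-1))$ on each of the remaining $k^n-1$, are exactly the ones the paper uses to produce the stated probability. The one real difference is how the coupling between visitation and the evolving greedy action is handled. You condition on the self-consistent $M1$ regime (greedy on a payoff-$R$ entry, the corner reached only through exploration) and then, correctly, flag the induction that this regime persists as the main obstacle. The paper instead takes the opposite, \emph{adversarial} extreme: it is ``agnostic to the actions actually visited'' and plugs in the visitation \emph{most favourable} to QMIX learning the optimum, namely the corner receiving frequency $T(1-\upsilon)$ and every other entry $T\upsilon/(k^n-1)$ (the ``best opportunity''). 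The point is that if $OPT(M1)\le OPT(M2)$ holds even under that best-case weighting, it holds for whatever weighting the $\epsilon$-greedy trajectory actually realises, so no trajectory-level induction is required at all; the paper then substitutes $\upsilon\gets\upsilon/2$ and invokes Hoeffding to bound the probability that the empirical counts deviate from these worst-case frequencies by the chosen tolerances. That adversarial device is precisely what dissolves the obstacle you identified; your route would also reach the conclusion but at the cost of the extra inductive closure you anticipate.
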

		\begin{proof}
			Given the exploration schedule $\epsilon(t)$, let $\epsilon(T)= \upsilon$ (which is the minimum value since $\epsilon(t)$ is decreasing in $T$). We reuse the machinery introduced in \cref{proof:uv} and provide an analysis which is agnostic to the actions actually visited by considering the adversarial case for the maximum possible $\delta$ for which $Q_{qmix}$ fails. This happens precisely when QMIX is provided with the "best opportunity" for learning the optimal policy (so that it visits the optimal action with probability $1-\epsilon(t), \forall t$). Therefore, the visitation frequencies we consider are : $\frac{T\upsilon}{k^n-1}$ for any suboptimal action and $T(1-\upsilon)$ for the optimal action. To compute the upper bound on $\delta$, we modify the objective for the quadratic program in \cref{proof:uv} as $X^Tdiag(a',b',1))X$ where $a' \gets \frac{a\upsilon}{(1-\upsilon)(a+b)}, b'\gets \frac{b\upsilon}{(1-\upsilon)(a+b)}$ in accordance with our visitations. Next, using the same reasoning as in \cref{bound}, we get that QMIX learns the suboptimal policy for
			\begin{align}
			0<\delta\leq R\Bigg[\sqrt{a\Bigg(\frac{\upsilon b}{(1-\upsilon)(a+b)}+1\Bigg)}-1\Bigg] \label{ebound}.
			\end{align} 
			Note that the upper bound of $\delta$ in \cref{ebound} is probabilistic in nature. Therefore, we provide a lower bound on the probability of this by considering the RHS of \cref{ebound} with $\upsilon\gets\upsilon/2$ and bounding the probability of deviation from the worst case visitation frequencies. By making use of the Hoeffding's lemma, we derive that: 
			\begin{align}
			P\Big[\text{empirical frequency of optimal}-\upsilon T \geq \frac{T\upsilon}{2}\Big] &\leq \exp(-\frac{T\upsilon^2}{2}),\\
			P\Big[\text{empirical frequency of suboptimal}-\frac{T\upsilon}{k^n-1} \leq -\frac{T\upsilon}{2(k^n-1)}\Big] &\leq \exp(-\frac{T\upsilon^2}{2(k^n-1)^2}).
			\end{align}
			Finally, by using the union bound, we conclude that with probability $\geq 1-\Big(\exp(-\frac{T\upsilon^2}{2}) + (k^n -1)\exp(-\frac{T\upsilon^2}{2(k^n-1)^2})\Big)$, QMIX fails to learn the optimal policy for 
			\begin{align}
			0<\delta\leq R\Bigg[\sqrt{a\Bigg(\frac{\upsilon b}{2(1-\upsilon/2)(a+b)}+1\Bigg)}-1\Bigg] \label{febound}
			\end{align} 
		\end{proof}
\end{subsection}

\begin{subsection}{Variational Mutual Information lower bound}
\label{proofMI}
Let the posterior over $z$ be given by $\log(p(z\vert \sigma(\mathbf{u},s)))$ and the variational approximation by $q_\upsilon(z\vert \sigma(\mathbf{u},s)))$
\begin{align}
\mathcal{J}_{MI} &= \mathcal{H}(\sigma(\mathbf{u},s)) - \mathcal{H}(\sigma(\mathbf{u},s)\vert z)\\
&= \mathcal{H}(z) - \mathcal{H}(z\vert\sigma(\mathbf{u},s)) \hspace{2cm}\text{\{MI is symmetric\}}\\
&= \mathcal{H}(z) + \mathbb{E}_{\sigma(\mathbf{u},s)}[ \mathbb{E}_{ z}[\log(p(z\vert \sigma(\mathbf{u},s)))]]\hspace{2cm}\text{\{Def. conditional entropy\}}\\
&= \mathcal{H}(z) + \mathbb{E}_{\sigma(\mathbf{u},s)}[ \mathbb{E}_{ z}[\log(p(z\vert \sigma(\mathbf{u},s)))- \log(q_\upsilon(z\vert \sigma(\mathbf{u},s)) +\log(q_\upsilon(z\vert \sigma(\mathbf{u},s))]]\hspace{2cm}\\
&= \mathcal{H}(z) + \mathbb{E}_{\sigma(\mathbf{u},s)}[ \mathbb{E}_{ z}[ \log(q_\upsilon(z\vert \sigma(\mathbf{u},s))]] + \mathbb{E}_{\sigma(\mathbf{u},s)}[KL(p(z\vert \sigma(\mathbf{u},s))\vert \vert q_\upsilon(z\vert \sigma(\mathbf{u},s))]\hspace{2cm}\\
& \geq  \mathcal{H}(z) + \mathbb{E}_{\sigma(\mathbf{u},s), z}[\log(q_\upsilon(z\vert \sigma(\mathbf{u},s)))]\hspace{2cm}\text{\{KL is non negative\}}\\
\end{align}

\end{subsection}

\section{Experimental Setup}
\label{appendix-experiments}

\subsection{Architecture and Training}
\label{appendix-arch_training}

All agent are designed as Deep Recurrent Q-Networks \citep{hausknecht_deep_2015}. At each time step, each agent network receives a local observation as input, which is fed to a $64$-dimensional fully-connected hidden layer, followed by a GRU recurrent layer and a fully-connected layer with $|U|$ outputs. To speed up the learning, all agent networks share the same set of parameters. A one-hot encoded agent id is concatenated to agent observations. The architectures for mixing and utility networks are the same as in \citep{rashid2018qmix}.

For all experiments we update the target networks after every $200$ episodes. We set $\gamma=0.99$. The optimisation is conducted using RMSprop with a learning rate of $5 \times 10^{-4}$ and $\alpha=0.99$ with no weight decay or momentum. 

\subsubsection{SMAC}
Exploration for QMIX is performed during training during which each agent executes $\epsilon$-greedy policy over its own actions. $\epsilon$ is annealed from $1.0$ to $0.05$ or $0.005$ over $50k$ time steps and is kept constant afterwards. 

We utilise a replay buffer of the most recent $5000$ environment steps. A single training step for a batch of size $32$ entire episodes is performed after every episodes. 

We set $Z=16$ for all the experiments.
We set $\lambda_{MI} = 0.001$ and $\lambda_{QL}=1$.
Unless otherwise mentioned, all MAVEN experiments use the trajectory-based MI loss. 
We use an entropy regularisation term with a coefficient of $0.001$ for the hierarchical policy.
We set the final value of $\epsilon$ to $0.05$ for MAVEN ans QMIX.

All SMAC experiments use the default reward and observation settings of the SMAC benchmark \citep{samvelyan2019starcraft}.

We run all methods for $10$ million environmental steps. This takes approximately 36 hours on a NVIDIA GTX 1080Ti GPU for 12 random initializations.

\subsubsection{$m$-step matrix games}
All methods anneal $\epsilon$ from $1$ to $0.01$ over $100$ timesteps and keep it constant afterwards.

A single training step for a batch of size $32$ is conducted after every episode. 

All methods are run for $100k$ timesteps.

For MAVEN we set $Z=16, \lambda_{MI}=1, \lambda_{QL}=1$ and use an entropy regularisation term with a coefficient of $0.001$ for the hierarchical policy.

\subsection{Additional plots \& ablations}
\label{appendix_full_results}
\begin{figure}[t!]
	\centering
	\subfigure[Varying the values for $Z$]{
	\includegraphics[width=0.47\textwidth]{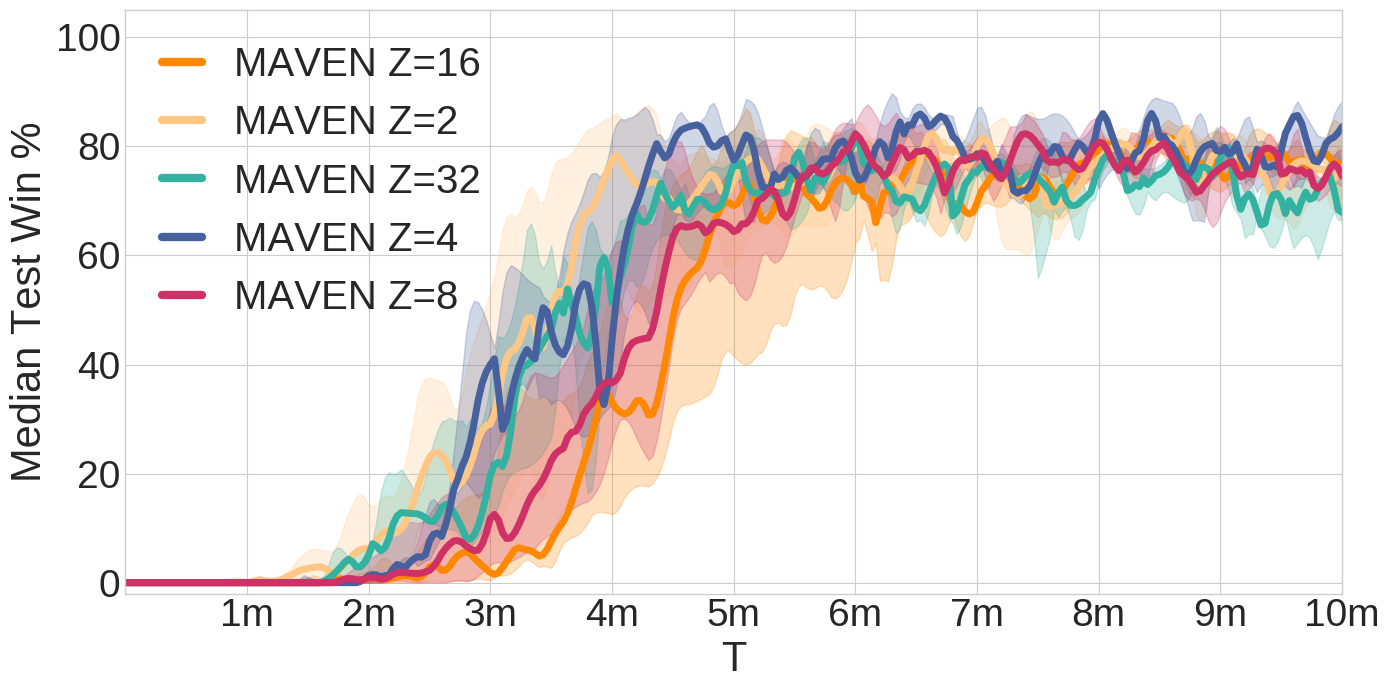}\label{z_variation}}
	\subfigure[Policy returns for different $Z$]{ \includegraphics[width=0.47\columnwidth]{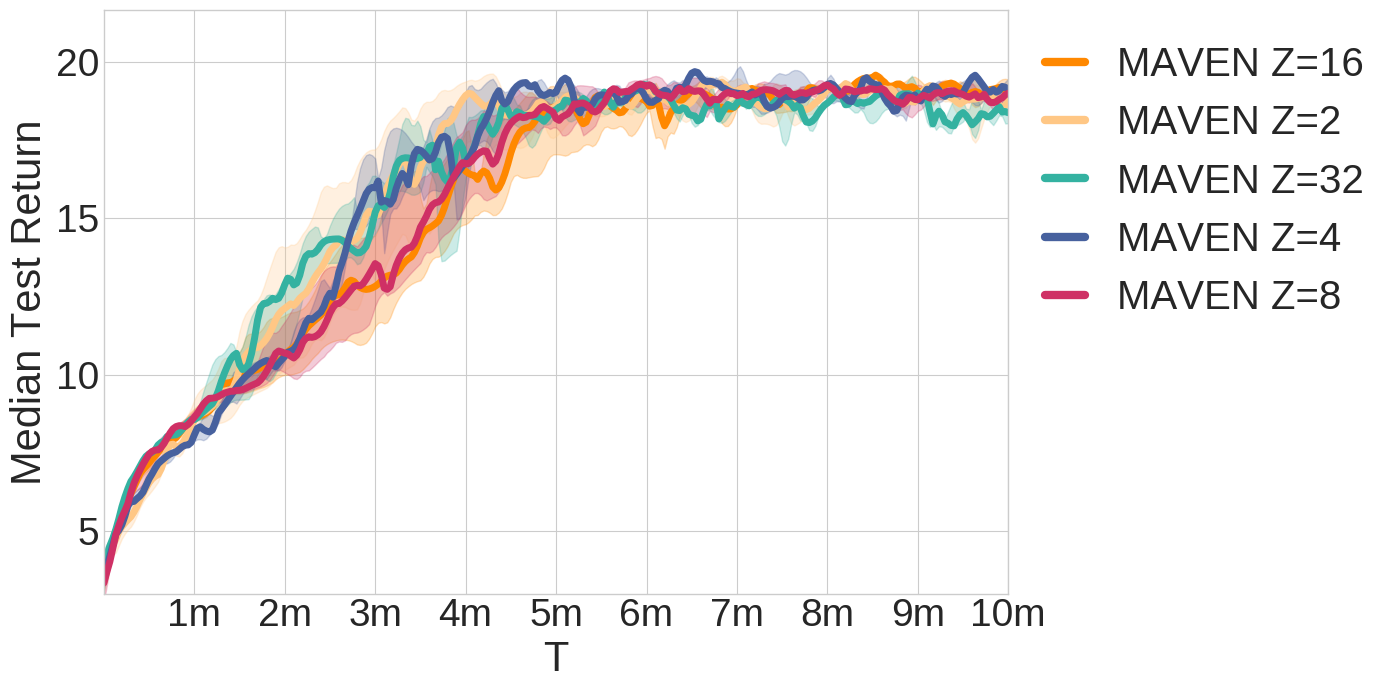}\label{entropyz}}
	\caption{Performance with varying the number of latent variable categories}
		\vspace{-0.5cm}
\end{figure}

We also consider varying the number of categories for the discrete latent variable \cref{z_variation}. While the number of categories loosely correlates with performance, it was not always the case.  For \texttt{micro\_corridor}, the results are inconclusive because they all use the same budget of gradient updates, yielding two opposing factors that cancel out (more $z$'s vs.\ less training per $z$). \cref{entropyz} gives the returns of the corresponding policies learnt. 

\begin{figure}[h]
	\centering
	\subfigure[\texttt{2-corridors}]{
		\includegraphics[width=0.48\columnwidth]{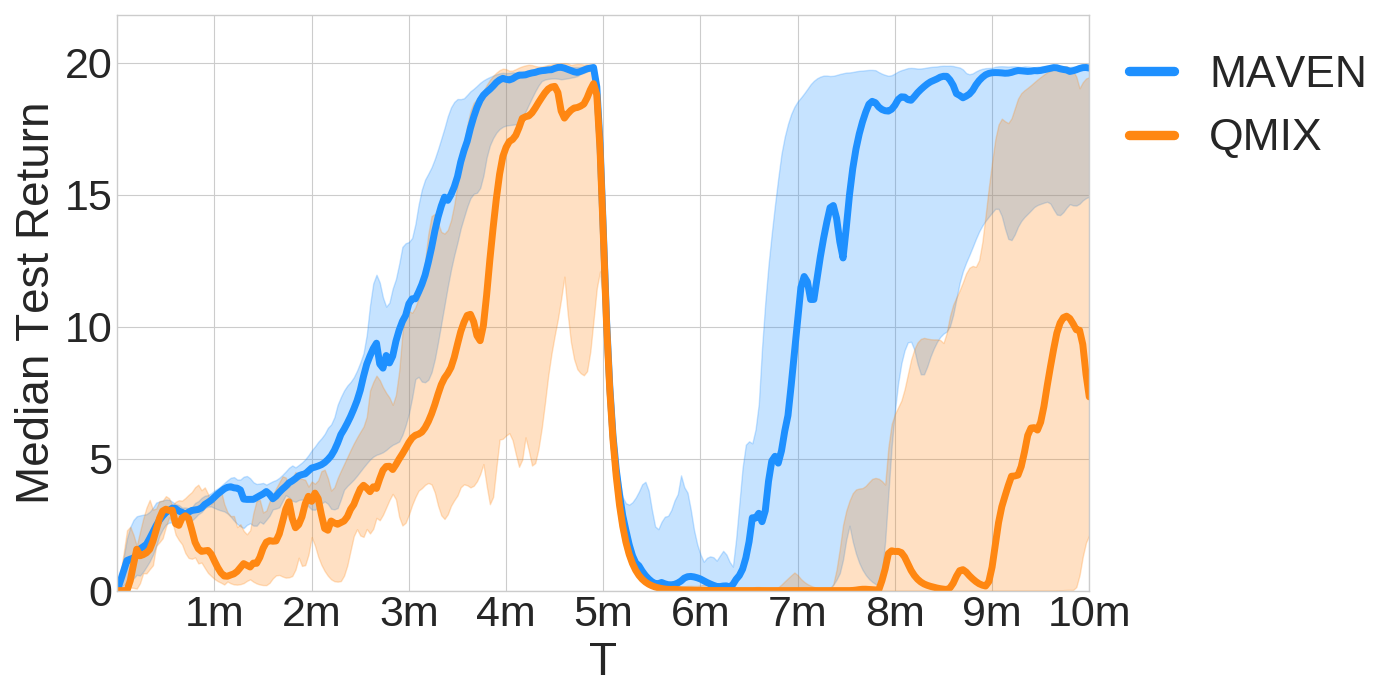}\label{r2cor}}
	\subfigure[\texttt{2s3z}]{
		\includegraphics[width=0.48\columnwidth]{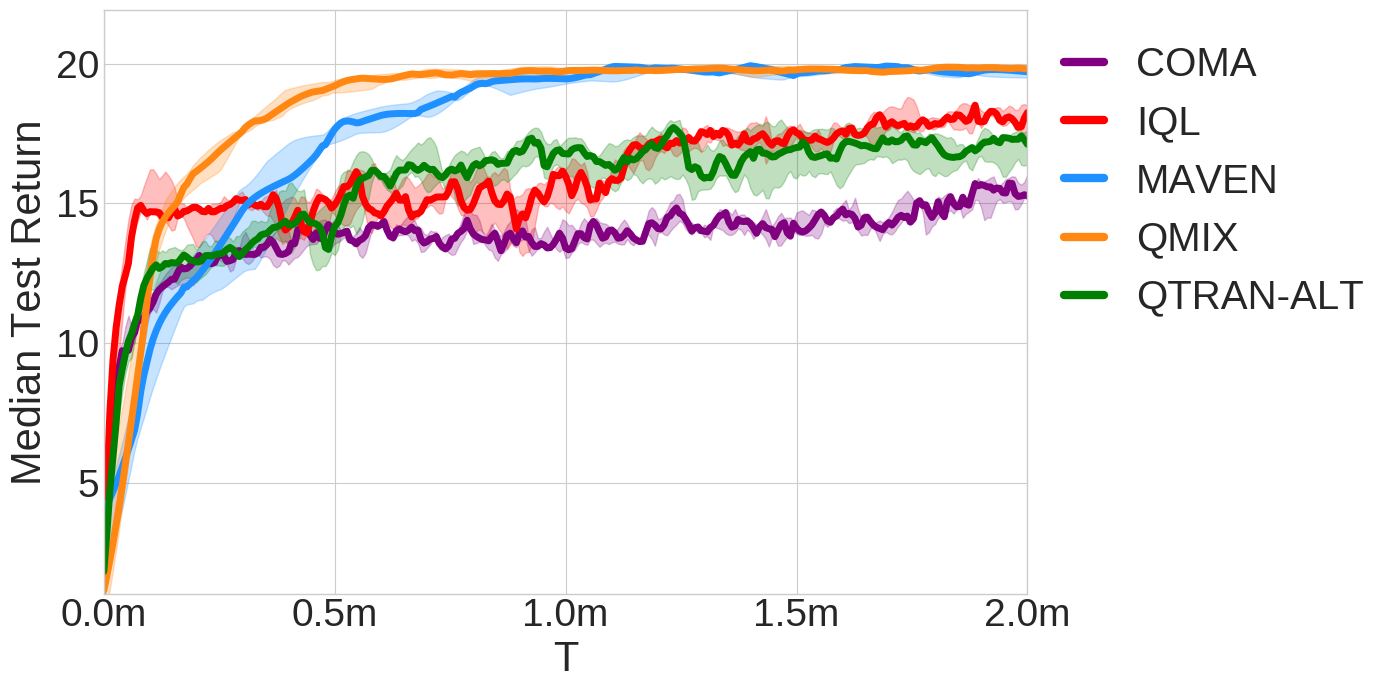}}
	\subfigure[\texttt{micro\_corridor}]{
		\includegraphics[width=0.48\columnwidth]{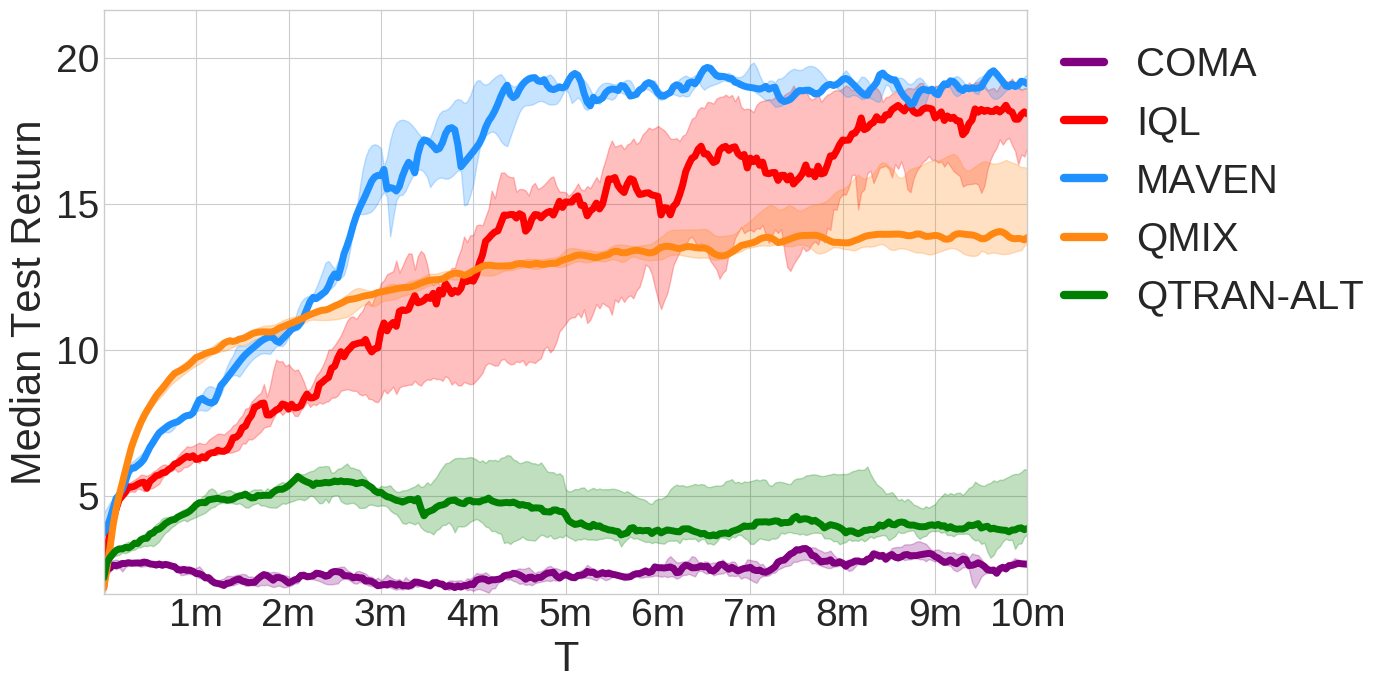}}
	\subfigure[\texttt{micro\_focus}]{
		\includegraphics[width=0.48\columnwidth]{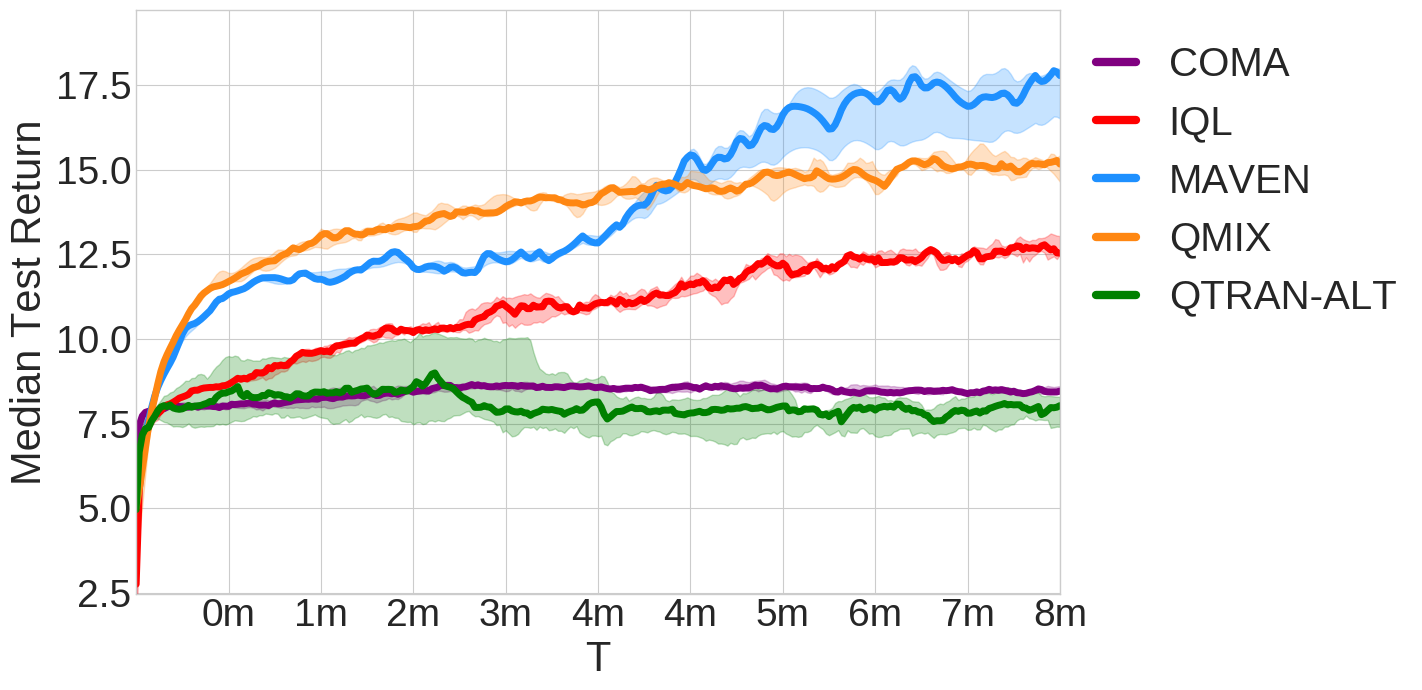}}
	\caption{Median test returns on SMAC scenarios.}
		\vspace{-0.5cm}
\end{figure}
	
\end{document}